\theoremstyle{definition}
\newtheorem{definition}{Definition}
\newtheorem{theorem}{Theorem}
\def\eqref#1{equation~\ref{#1}}
\def\1{\bm{1}}
\DeclareMathAlphabet{\mathsfit}{\encodingdefault}{\sfdefault}{m}{sl}
\SetMathAlphabet{\mathsfit}{bold}{\encodingdefault}{\sfdefault}{bx}{n}
\DeclareMathOperator*{\argmax}{arg\,max}
\newtheorem{lemma}{Lemma}
\newtheorem{corollary}{Corollary}
\newtheorem{conjecture}{Conjecture}
\definecolor{cvprblue}{rgb}{0.21,0.49,0.74}
\newcolumntype{M}{>{\columncolor{pink}}c}
\title{Geometric Insights into Focal Loss: \\Reducing Curvature for Enhanced Model Calibration}
\author{
Masanari Kimura$^*$\\
University of Melbourne\\
{\tt\small m.kimura@unimelb.edu.au}
\and
Hiroki Naganuma$^*$\\
Mila, Universit\'e de Montr\'eal, ZOZO Research \\
{\tt \small naganuma.hiroki@mila.quebec}
}
\begin{document}
\maketitle
\def\thefootnote{*}\footnotetext{These authors contributed equally to this work}\def\thefootnote{\arabic{footnote}}

\begin{abstract}
The key factor in implementing machine learning algorithms in decision-making situations is not only the accuracy of the model but also its confidence level.
The confidence level of a model in a classification problem is often given by the output vector of a softmax function for convenience.
However, these values are known to deviate significantly from the actual expected model confidence.
This problem is called model calibration and has been studied extensively.
One of the simplest techniques to tackle this task is focal loss, a generalization of cross-entropy by introducing one positive parameter.
Although many related studies exist because of the simplicity of the idea and its formalization, the theoretical analysis of its behavior is still insufficient.
In this study, our objective is to understand the behavior of focal loss by reinterpreting this function geometrically.
Our analysis suggests that focal loss reduces the curvature of the loss surface in training the model.
This indicates that curvature may be one of the essential factors in achieving model calibration.
We design numerical experiments to support this conjecture to reveal the behavior of focal loss and the relationship between calibration performance and curvature.
\end{abstract}

\section{Introduction}
\label{sec:introduction}
In recent years, neural networks have been used successfully in many fields, including computer vision~\citep{khan2018guide,zhou2012artificial}, natural language processing~\citep{goldberg2016primer,goldberg2022neural}, and signal processing~\citep{kiranyaz20191,hu2018handbook}.
Such successful results are grounded in the outstanding accuracy of neural networks.
However, for real-world applications of machine learning algorithms, merely high accuracy is not sufficient in many cases.
Especially in decision-making situations, the confidence level of the model in its own predictions is important.
For example, when the output of a machine learning model is used to assist human decision-making rather than being used directly, such as in pathology classification or credit forecasting, the focus is not only on the accuracy of the model itself but also on how much the output of the model can be trusted.
In many cases, model confidence is quantified by interpreting the output of the final layer activation function as a classification probability.
In fact, the output vector of the softmax function is treated as a probability vector because its value range is $[0,1]$.
However, although the softmax output seems to be regarded as class classification probabilities, it is known that there is a discrepancy with reality.
For example, even if a human observer is unsure whether an image is a dog or a cat, the machine learning model may determine that the image is a dog with a 99\% probability.
This phenomenon is called overconfidence~\citep{kristiadi2020being,wei2022mitigating} and is known to be one of the most important issues to be solved in practical applications of machine learning algorithms.

The task of addressing these problems is known as model calibration~\citep{guo2017calibration,mukhoti2020calibrating,wald2021calibration}.
The goal of model calibration is to ensure that the model output probabilities can be interpreted as confidence levels.
There are several metrics for evaluating model calibration, including ECE and its variants, and our objective is to improve them to guarantee the validity of the machine learning model as a confidence level for the output vector of the model.
Focal loss~\citep{lin2017focal,mukhoti2020calibrating} is one of the most popular techniques used to improve model calibration performance.
It was originally proposed as a heuristic for object detection in computer vision, but its effectiveness for model calibration was later reported.
There are many variants and related studies of focal loss due to its simplicity of conception and formulation~\citep{li2020generalized,li2021generalized,ghosh2022adafocal,li2022equalized,tao2023dual}.
The main idea of focal loss is weighting the cross-entropy according to the classification probability.
Focal loss is a generalization of cross-entropy with a positive value parameter that behaves to suppress the loss of well-classified instances.
Despite its usefulness, however, focal loss lacks theoretical analysis.
This is a drawback in the development of variant algorithms and the improvement of the model training process.

In this study, we analyze its behavior using the geometrical reinterpretation of focal loss.
In particular, we show that focal loss behaves as a reduction in the curvature of the loss surface.
This result is confirmed by both the reformulation of focal loss as an optimization under the entropy constraint and the Taylor expansion of focal loss.
Moreover, based on our analysis and the reports of existing studies that focal loss is effective for model calibration, we can expect that curvature reduction is one of the essential factors for calibration.
This conjecture suggests that regularization that explicitly reduces the curvature of the loss surface may be useful in improving model calibration performance.
Then, we design numerical experiments to support the above conjecture and investigate the behavior of focal loss.

Our contributions are summarized as follows:
\begin{itemize}
    \item We reinterpret focal loss geometrically and show that it behaves as the reduction of curvature (\cref{thm:sharpness_focal_loss}).
    \item We provide the conjecture of the connection between curvature and calibration performance (Conjecture \ref{cnj:focal_gamma_curvature} and \cref{fig:correlation}).
    \item We design and conduct the numerical experiments to demonstrate our theoretical hypothesis (\cref{fig:focal_loss_ece}).
\end{itemize}
Finally, this paper is organized as follows: we provide the preliminary knowledge and background required for our discussions in Section~\ref{sec:preliminary}, we derive the analysis of the focal loss from the perspective of geometry in Section~\ref{sec:focal_loss_and_curvature}, we design and conduct the numerical experiments in Section~\ref{sec:numerical_experiments} and we provide the conclusion and discussion in Section~\ref{sec:conclusion}.
\section{Preliminary}
\label{sec:preliminary}
In this section, we provide the preliminary knowledge and background required for our discussion.

The output of the softmax function employed in the output layer of recent neural networks for class classification is often regarded as a vector of probabilities that the input vector belongs to the respective class.
The value of this softmax output is often also interpreted as the confidence of the model with respect to the input.
However, it is known that these values differ significantly from the actual expected confidence of the models.
This phenomenon is called overconfidence~\citep{kristiadi2020being,wei2022mitigating} and is one of the major challenges to the adoption of machine learning algorithms in decision making.
For example, when the output of a machine learning model is used to assist human decision-making rather than being used directly, such as in pathology classification or credit forecasting, the focus is not only on the accuracy of the model itself but also on how much the output of the model can be trusted.

The task of ensuring that the output of a machine learning model and its confidence level are in line with the actual conditions is called model calibration~\citep{guo2017calibration,krishnan2020improving,gawlikowski2021survey}.
The goal of model calibration is to ensure that the estimated class probabilities are consistent with what would naturally occur:
\begin{align}
    \mathbb{P}\left(\argmax_{y\in\mathcal{Y}} p(y \mid \bm{x};\bm{\theta}) = y \Big| p(y \mid \bm{x};\bm{\theta}) = s\right) = s,
\end{align}
for all $s \in [0, 1]$ and $\bm{x} \in \mathcal{X}$, where $\mathcal{X}$ is an input space, $\mathcal{Y}$ is an output space and $p(\cdot \mid \cdot; \bm{\theta})$ is a model parameterized by $\bm{\theta} \in \Theta$ with a parameter space $\Theta$.
The most popular calibration performance metric is the Expected Calibration Error (ECE)~\citep{naeini2015obtaining}, which we also use in this study.
\begin{definition}[Expected Calibration Error~\citep{naeini2015obtaining}]
For $(\bm{x}, y)$, let $\hat{y} = \argmax p(y|\bm{x};\bm{\theta})$.
The expected calibration error is defined as
\begin{align}
    \mathrm{ECE}(\bm{\theta}) \coloneqq \mathbb{E}\left[\left|\mathbb{P}\left(\hat{y} = y \mid p(y|\bm{x};\bm{\theta}) = s\right) - s\right|\right],
\end{align}
where $s$ is the prediction probability.
\end{definition}
It is known that improvements in ECE can lead to achieving a confidence of the model that is in line with human intuition.

One technique known to be effective in achieving model calibration is the following focal loss~\citep{lin2017focal,mukhoti2020calibrating}.
\begin{definition}[Focal Loss~\citep{lin2017focal}]
    \label{def:focal_loss}
    For some input vector $\bm{x}\in\mathcal{X}$ and $\gamma \geq 0$, the focal loss is defined as follows.
    \begin{align}
        \mathcal{L}_{FL}(\bm{\theta};\bm{x},\gamma) \coloneqq - \sum^{|\mathcal{Y}|}_{y=1}(1 - p(y\mid \bm{x};\bm{\theta}))^\gamma \ln p(y\mid \bm{x};\bm{\theta})q(y|\bm{x}), \label{eq:focal_loss}
    \end{align}
    where $q(y\mid\bm{x})$ is the ground truth probability.
\end{definition}
Due to the simplicity of the idea and formulation of focal loss, many variants of focal loss have been proposed~\citep{li2020generalized,ghosh2022adafocal,tao2023dual}.
However, most studies on focal loss are limited to empirical reports, and theoretical analysis is lacking.
Therefore, theoretical discussions on the behavior of focal loss would be very beneficial for the derivation of new algorithms and implications for model calibration.
We can see that Eq.~\ref{eq:focal_loss} recovers cross-entropy with $\gamma=0$.
\section{Geometric Reinterpretation of Focal Loss}

\begin{figure}[htb]
    \centering	
    \includegraphics[width=0.49\linewidth]{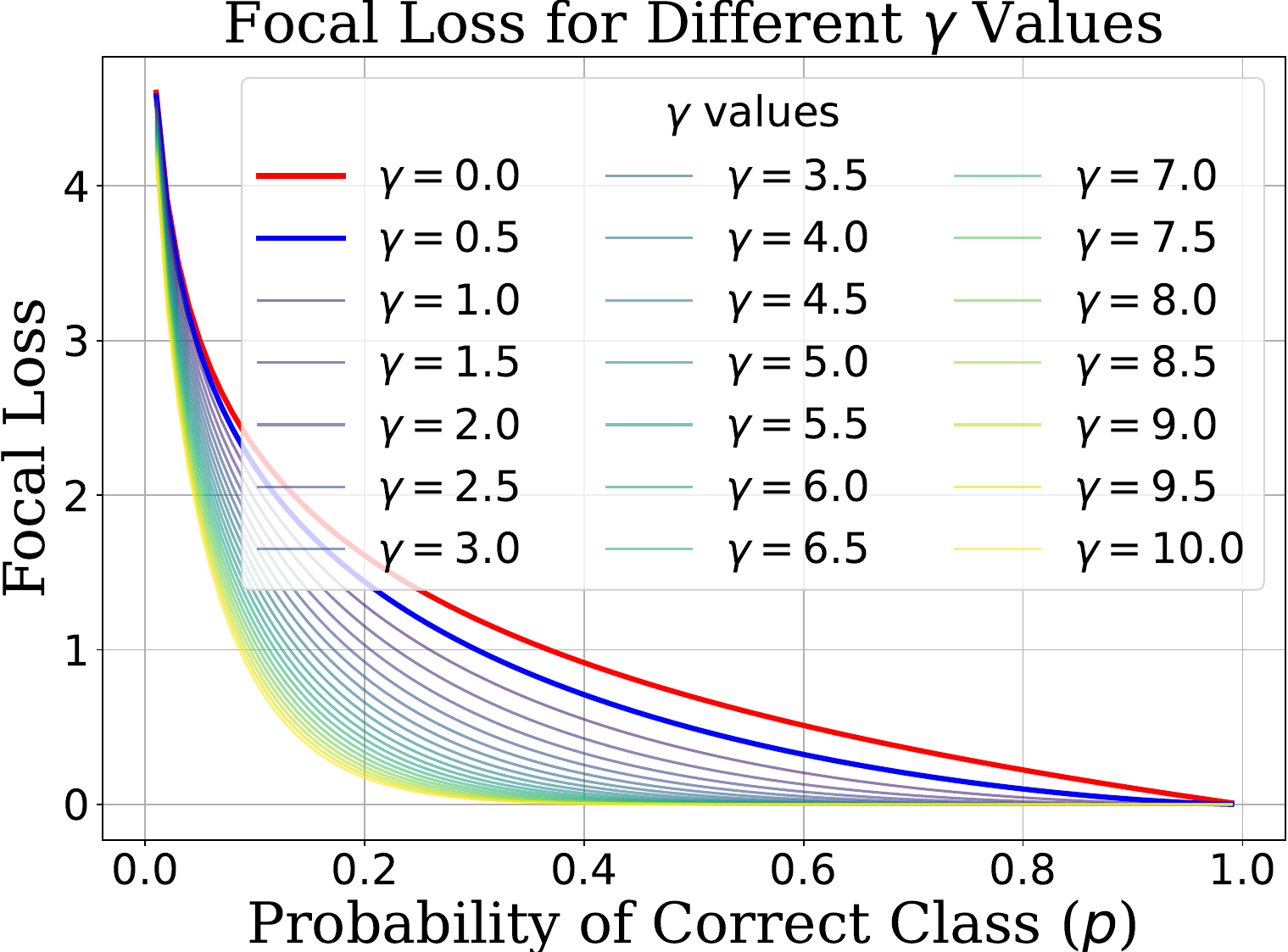}
    \includegraphics[width=0.49\linewidth]{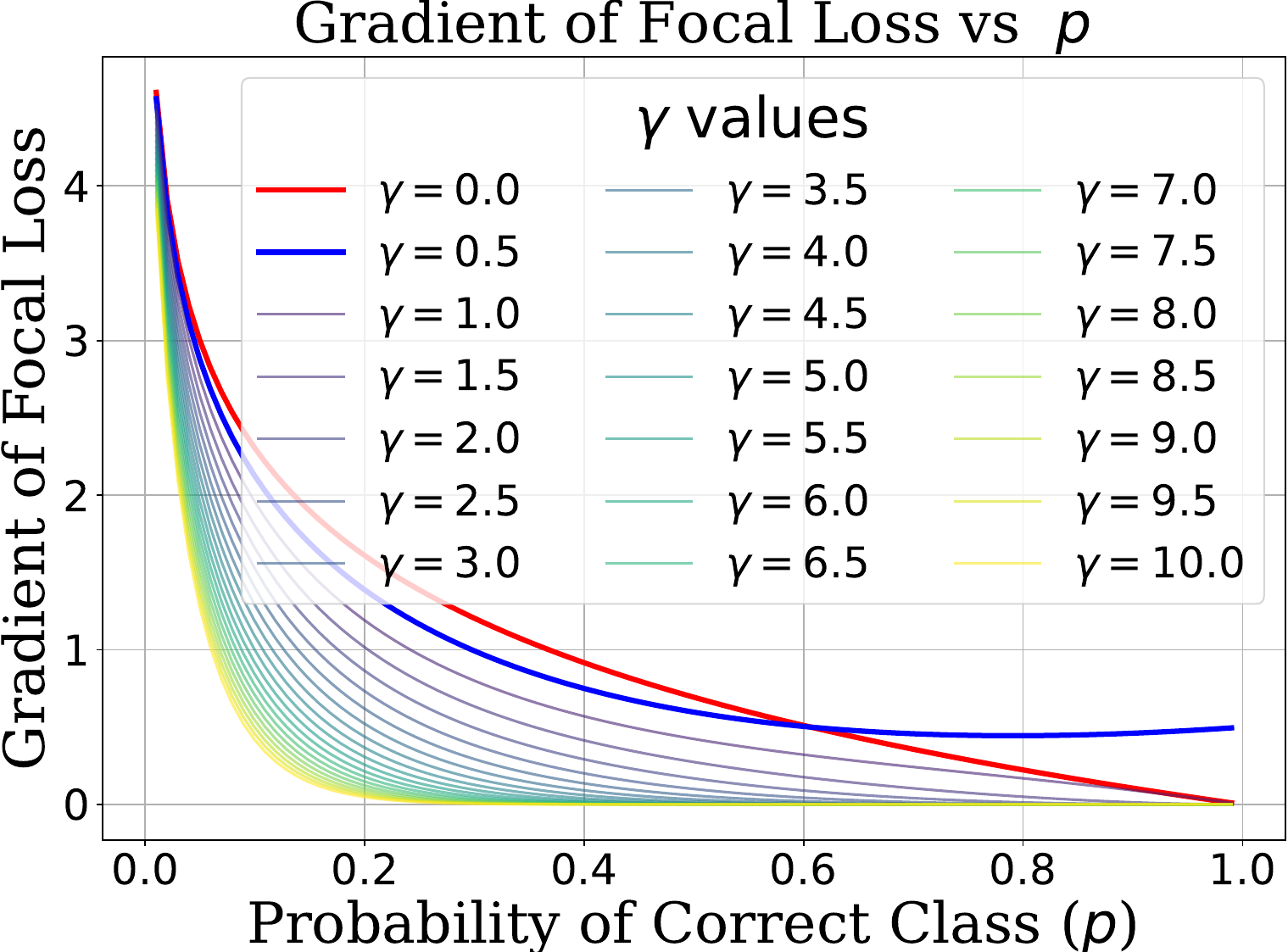}
    
    \caption{Changes in loss and gradient with respect to model prediction probability $p$ for different values of the hyperparameter $\gamma$ in focal loss. The vertical axes represent loss and gradient, respectively, and the horizontal axis represents model prediction probability $p$. Focal loss coincides with cross-entropy when $\gamma = 1$. For $0 <\gamma < 1$ , as shown in the right figure, the gradient does not converge (having an effect opposite to the original intention). When $\gamma \geq 1$, the gradient and loss for well-classified samples ($p$ close to 1) are smaller than those for cross-entropy.}
    \label{fig:focal_loss_gradient}
\end{figure}

\label{sec:focal_loss_and_curvature}
\subsection{Focal Loss as Curvature Reduction}
In this study, we reinterpret focal loss geometrically and investigate its behavior.
First, we consider the following lower bound for focal loss.
\begin{lemma}
   For $\gamma\geq 0$, we have
    \begin{align}
        \mathcal{L}_{FL}(\bm{\theta};\bm{x},\gamma) \geq \mathcal{L}_{CE}(\bm{\theta};\bm{x}) - \gamma H(y\mid\bm{x}, \bm{\theta}), \label{eq:focal_loss_and_cross_entropy}
    \end{align}
    where $\mathcal{L}_{CE}(\bm{\theta};\bm{x},\gamma) = -\sum^{|\mathcal{Y}|}_{y=1} q(y\mid\bm{x})\ln p(y\mid\bm{x};\bm{\theta})$ is the cross-entropy loss and $\mathcal{H}(y|\bm{x},\bm{\theta})$ is the conditional entropy.
\end{lemma}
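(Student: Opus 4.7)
The plan is to study $F(\gamma) := \mathcal{L}_{FL}(\bm{\theta};\bm{x},\gamma) - \mathcal{L}_{CE}(\bm{\theta};\bm{x}) + \gamma\,H(y\mid\bm{x},\bm{\theta})$ as a function of $\gamma \geq 0$ and show $F(\gamma) \geq 0$. Since the focal weight $(1-p_y)^0 = 1$ collapses focal loss to cross-entropy, $F(0) = 0$, so the inequality reduces to establishing the correct sign of the remainder. Writing $p_y := p(y\mid\bm{x};\bm{\theta})$ and $q_y := q(y\mid\bm{x})$, differentiating twice in $\gamma$ under the sum using $\tfrac{d}{d\gamma}(1-p_y)^\gamma = (1-p_y)^\gamma \ln(1-p_y)$ gives
\begin{equation*}
F''(\gamma) = -\sum_y q_y\,[\ln(1-p_y)]^2 (1-p_y)^\gamma \ln p_y \;\geq\; 0,
\end{equation*}
because $\ln p_y \leq 0$ while every other factor is non-negative. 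Thus $F$ is convex on $[0,\infty)$, and the tangent-line inequality at the origin reduces the lemma to the single condition $F'(0) \geq 0$.

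Unpacking, $F'(0) \geq 0$ rewrites as $H(y\mid\bm{x},\bm{\theta}) \geq \sum_y q_y\, \ln p_y \ln(1-p_y)$, which I would prove by a pointwise-then-sum argument. First, for each fixed $y$, the elementary integral bounds $-\ln(1-p) \leq p/(1-p)$ and $-\ln p \leq (1-p)/p$ combine (after dividing the target by the positive quantity $\ln p_y \ln(1-p_y)$) into $\tfrac{p_y}{-\ln(1-p_y)} + \tfrac{1-p_y}{-\ln p_y} \geq 1$, which is equivalent to $\ln p_y \ln(1-p_y) \leq -p_y \ln p_y - (1-p_y)\ln(1-p_y) = h(p_y)$, the binary entropy of $p_y$. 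Second, the grouping property of Shannon entropy gives $h(p_y) \leq H(p_1,\ldots,p_K)$ for every $y$, since lumping the other $K-1$ classes into one cannot increase entropy. Averaging the pointwise bound against $q_y$ and using $\sum_y q_y = 1$ yields $\sum_y q_y \ln p_y \ln(1-p_y) \leq H$, which is exactly $F'(0) \geq 0$.

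The step I expect to cost real attention is the pointwise inequality $\ln p \ln(1-p) \leq h(p)$ on $(0,1)$; everything else is bookkeeping. A tempting shortcut for $\gamma \geq 1$ is Bernoulli's inequality $(1-p)^\gamma \geq 1 - \gamma p$, which after multiplying by $-q_y \ln p_y \geq 0$, summing, and using $q_y \leq 1$ in the residual $\gamma \sum_y q_y p_y \ln p_y \geq \gamma \sum_y p_y \ln p_y = -\gamma H$ delivers the lemma in three lines. However, Bernoulli reverses on $\gamma \in (0,1)$, so to cover the declared range $\gamma \geq 0$ the convex/tangent route above seems unavoidable. If the integral-bound derivation of $\ln p \ln(1-p) \leq h(p)$ turns out to be fiddly, a fallback is a direct calculus check on $h(p) - \ln p \ln(1-p)$, using symmetry about $p = 1/2$ and the boundary behavior $h(p) - \ln p \ln(1-p) \to 0$ as $p \to 0,1$.
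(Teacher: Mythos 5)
Your proof is correct, and it takes a genuinely different and more robust route than the paper's. The paper's own argument applies Bernoulli's inequality $(1-p)^\gamma \geq 1 - \gamma p$ to obtain the first step, then bounds the residual $-\gamma\sum_y q_y p_y \ln p_y$ by $-\gamma H$ via H\"older. But Bernoulli's inequality in this direction requires $\gamma \geq 1$ (or $\gamma \leq 0$); on $\gamma \in (0,1)$ it reverses, so the paper's first inequality does not hold there, even though the lemma is stated for all $\gamma \geq 0$. You flag exactly this issue and fix it: by differentiating $F(\gamma) = \mathcal{L}_{FL} - \mathcal{L}_{CE} + \gamma H$ twice in $\gamma$, noting $F'' \geq 0$ because $\ln p_y \leq 0$, and invoking the tangent-line inequality with $F(0) = 0$, the lemma reduces to $F'(0) \geq 0$. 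You then establish the pointwise bound $\ln p\,\ln(1-p) \leq h(p)$ cleanly via $-\ln(1-p) \leq p/(1-p)$ and $-\ln p \leq (1-p)/p$, and finish with the grouping inequality $h(p_y) \leq H(p_1,\ldots,p_K)$ and $\sum_y q_y = 1$. All of these steps check out. What your route buys over the paper's is coverage of the full declared range $\gamma \geq 0$, at the cost of the nonstandard auxiliary inequality $\ln p\,\ln(1-p) \leq h(p)$; the paper's Bernoulli--H\"older argument is shorter but is only valid for $\gamma \geq 1$ (and your ``tempting shortcut'' paragraph is essentially a streamlined version of it, replacing H\"older with the elementary bound $q_y \leq 1$). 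One small bookkeeping note: the $H$ in the lemma is the entropy of the predicted distribution $p(\cdot\mid\bm{x};\bm{\theta})$, i.e.\ $H = -\sum_y p_y \ln p_y$, which is what both you and the paper implicitly use.
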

\begin{proof}
    From Definition~\ref{def:focal_loss}, we have
    \begin{align}
        \mathcal{L}_{FL}(\bm{\theta};\bm{x},\gamma) &= - \sum^{|\mathcal{Y}|}_{y=1}(1 - p(y\mid \bm{x};\bm{\theta}))^\gamma \ln p(y\mid \bm{x};\bm{\theta})q(y|\bm{x}) \nonumber \\
        &\geq -\sum^{|\mathcal{Y}|}_{y=1}(1 - \gamma p(y\mid\bm{x};\bm{\theta}))\ln p(y\mid \bm{x};\bm{\theta})q(y|\bm{x}) \nonumber \\
        &\geq -\sum^{|\mathcal{Y}|}_{y=1}q(y\mid\bm{x})\ln p(y\mid \bm{x}; \bm{\theta}) \nonumber \\
        &\quad - \gamma \max_{k}q(k \mid \bm{x})\sum^{|\mathcal{Y}|}_{y=1}\left|p(y\mid\bm{x};\bm{\theta})\ln p(y\mid\bm{x};\bm{\theta})\right| \nonumber \\
        &\geq \mathcal{L}_{CE}(\bm{\theta}; \bm{x}) - \gamma H(y \mid \bm{x}; \bm{\theta}).
    \end{align}
    Here, the first inequality follows from Bernoulli's inequality and the second from H\"{o}lder's inequality.
\end{proof}
From Eq.~\ref{eq:focal_loss_and_cross_entropy}, learning procedure with focal loss can be regarded as maximizing conditional entropy term under the constraint
\begin{align}
    \int_{\Theta} p(\bm{\theta})\mathcal{L}_{CE}(\bm{\theta};\bm{x})d\bm{\theta} = \delta_\gamma \leq \delta, \label{eq:focal_loss_constraint}
\end{align}
for some $\delta\geq 0$ and $\delta_\gamma\geq 0$.
\begin{theorem}
\label{thm:maxwell_boltzmann}
Among all distributions defined on $\Theta$ with a given $\delta_\gamma$, the distribution with the largest entropy is the Maxwell-Boltzmann distribution
\begin{align}
    \tilde{p}(\bm{\theta}) &= \alpha\cdot e^{-\beta\cdot\mathcal{L}_{CE}(\bm{\theta})} \nonumber\\
    &= \alpha\cdot\exp\left\{\beta\int_{\mathcal{X}}\sum^{|\mathcal{Y}|}_{y=1}q(y|\bm{x})\ln p(y|\bm{x};\bm{\theta})d\bm{x}\right\}, \label{eq:maxwell_boltzmann}
\end{align}
where the constants $\alpha$ and $\beta$ are determined from the following constraints
\begin{align}
    \int_\Theta p(\bm{\theta})d\bm{\theta} = 1 &\Longleftrightarrow \alpha = \left(\int_\Theta e^{-\beta\cdot\mathcal{L}_{CE}(\bm{\theta})}d\bm{\theta}\right)^{-1}, \label{eq:maxwell_boltzmann_const_1} \\
    \int_\Theta\mathcal{L}_{CE}(\bm{\theta})p(\bm{\theta})d\bm{\theta} = \delta_\gamma &\Longleftrightarrow \int_\Theta \mathcal{L}_{CE}(\bm{\theta})e^{-\beta\cdot\mathcal{L}_{CE}(\bm{\theta})}d\bm{\theta} = \frac{\delta_\gamma}{\alpha} \label{eq:maxwell_boltzmann_const_2}.
\end{align}
\end{theorem}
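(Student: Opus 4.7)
The plan is to treat the claim as a standard maximum-entropy result, solve the underlying constrained variational problem via Lagrange multipliers, and then upgrade the resulting stationary point to a genuine global maximum through a relative-entropy (Gibbs-inequality) argument.

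First, I would rephrase the task as maximizing the differential entropy $H[p] = -\int_\Theta p(\bm{\theta})\ln p(\bm{\theta})\,d\bm{\theta}$ over densities $p$ on $\Theta$ subject to two linear constraints: normalization $\int_\Theta p(\bm{\theta})\,d\bm{\theta} = 1$ and the prescribed mean cross-entropy $\int_\Theta \mathcal{L}_{CE}(\bm{\theta})\,p(\bm{\theta})\,d\bm{\theta} = \delta_\gamma$ inherited from \eqref{eq:focal_loss_constraint}. Introducing multipliers $\lambda_0$ and $\beta$, I form the functional
\[
J[p] = -\int_\Theta p\ln p\,d\bm{\theta} - \lambda_0\!\left(\int_\Theta p\,d\bm{\theta}-1\right) - \beta\!\left(\int_\Theta \mathcal{L}_{CE}\,p\,d\bm{\theta}-\delta_\gamma\right),
\]
take the pointwise functional derivative, and set $\delta J/\delta p(\bm{\theta}) = -\ln p(\bm{\theta}) - 1 - \lambda_0 - \beta\,\mathcal{L}_{CE}(\bm{\theta}) = 0$. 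Solving yields $\tilde{p}(\bm{\theta}) = e^{-1-\lambda_0}\,e^{-\beta\,\mathcal{L}_{CE}(\bm{\theta})}$, which is precisely the form claimed in \eqref{eq:maxwell_boltzmann} upon setting $\alpha := e^{-1-\lambda_0}$. Plugging $\tilde{p}$ back into the normalization and mean constraints then reproduces \eqref{eq:maxwell_boltzmann_const_1} and \eqref{eq:maxwell_boltzmann_const_2}.

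Second, because Lagrange multipliers only give a critical point, I would confirm that $\tilde{p}$ is actually the global maximizer via the non-negativity of relative entropy. For any competitor $p$ satisfying both constraints, Gibbs' inequality gives
\[
0 \le \int_\Theta p(\bm{\theta})\ln\!\frac{p(\bm{\theta})}{\tilde{p}(\bm{\theta})}\,d\bm{\theta} = -H[p] - \ln\alpha + \beta\!\int_\Theta \mathcal{L}_{CE}\,p\,d\bm{\theta} = -H[p] - \ln\alpha + \beta\delta_\gamma.
\]
Since $\tilde{p}$ itself satisfies the mean constraint, the analogous identity applied to $\tilde{p}$ yields $-H[\tilde{p}] - \ln\alpha + \beta\delta_\gamma = 0$. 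Subtracting gives $H[\tilde{p}] - H[p] \ge 0$ with equality only when $p = \tilde{p}$ almost everywhere, establishing both maximality and uniqueness.

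The step I expect to require the most care is \emph{not} the variational computation but the existence of admissible multipliers realizing a given $\delta_\gamma$: one must argue that the partition function $Z(\beta) = \int_\Theta e^{-\beta\,\mathcal{L}_{CE}(\bm{\theta})}\,d\bm{\theta}$ is finite on a suitable range of $\beta$, and that the map $\beta \mapsto -\partial_\beta \ln Z(\beta)$ is continuous and surjects onto the admissible values of $\delta_\gamma$. These issues are standard in statistical-mechanics derivations of Gibbs measures and can be handled under mild coercivity/integrability hypotheses on $\mathcal{L}_{CE}$ over $\Theta$, which I would state explicitly as prerequisites for the theorem.
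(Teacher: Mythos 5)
Your proposal follows essentially the same Lagrange-multiplier route as the paper: form the Lagrangian with two multipliers enforcing normalization and the prescribed mean cross-entropy, take the functional derivative, and obtain the exponential-family form $\tilde p(\bm{\theta}) = \alpha e^{-\beta\mathcal{L}_{CE}(\bm{\theta})}$. The core variational step is identical. There are, however, two differences worth flagging. First, you strengthen the argument by adding the Gibbs-inequality step: the paper stops at the Euler--Lagrange stationarity condition and never verifies that the critical point is actually a global maximizer (or that it is unique among admissible densities), whereas your relative-entropy computation $H[\tilde p]-H[p]=D_{KL}(p\Vert\tilde p)\ge 0$ closes that gap cleanly and for free gives uniqueness in the density. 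Second, the paper spends a nontrivial portion of its proof on the existence and uniqueness of the multiplier pair $(\alpha,\beta)$ satisfying the two constraints, by exhibiting a function $\Phi(\beta)$ built from the partition integrals and arguing it is continuous, decreasing, and ranges over $(0,\infty)$ so that the constraint equation has a unique solution; you identify exactly this issue in your closing paragraph but leave it as a stated prerequisite rather than carrying it out. If you fold that monotonicity argument in (and note that $-\partial_\beta \ln Z(\beta)$ is the variance of $\mathcal{L}_{CE}$ under $\tilde p$, hence positive, so the mean is strictly decreasing in $\beta$), your proof would subsume the paper's and be strictly more complete. One small remark: the paper also prefaces the variational problem with a chain-rule identity relating $H(y\mid\bm x;\bm\theta)$ to $H(\bm\theta)$ to connect back to the focal-loss bound; that step is motivational rather than logically necessary for the theorem as stated, so its absence in your proposal is not a gap.
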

\begin{proof}
From chain rule, the conditional entropy is written as
\begin{align*}
    H(y|\bm{x};\bm{\theta}) &= H(y,\bm{\theta}|\bm{x}) - H(\bm{\theta}).
\end{align*}
Then, maximizing conditional entropy $H(y|\bm{x};\bm{\theta})$ is equal to minimizing entropy $H(\bm{\theta})$.
In order to minimize the entropy $H(\bm{\theta}) = -\int_\Theta p(\bm{\theta})d(\bm{\theta})$ subject to constraints
\begin{align}
    \int_\Theta p(\bm{\theta})d\bm{\theta} &= 1 \nonumber\\
    \int_\Theta p(\bm{\theta})\mathcal{L}_{CE}(\bm{\theta}) d\bm{\theta} &= \delta_\gamma,
\end{align}
consider the Lagrangian
\begin{align}
    L = -p(\bm{\theta})\ln p(\bm{\theta}) - \beta\mathcal{L}_{CE}(\bm{\theta})p(\bm{\theta}) - \eta p(\bm{\theta}). \label{eq:proof_maxwell_boltzmann_lagrangian}
\end{align}
The Euler-Lagrange equation~\citep{fox1987introduction} for Eq.\ref{eq:proof_maxwell_boltzmann_lagrangian} is
\begin{align}
    \ln p(\bm{\theta}) = \beta\mathcal{L}_{CE}(\bm{\theta}) - \eta + 1, 
\end{align}
with solution
\begin{align}
    p(\bm{\theta}) &= \alpha\cdot e^{-\beta\cdot\mathcal{L}_{CE}(\bm{\theta})}, \label{eq:proof_maxwell_boltzmann_const_1}\\
    \alpha &= e^{1 - \eta}. \label{eq:proof_maxwell_boltzmann_const_2}
\end{align}
From Eq.~\ref{eq:proof_maxwell_boltzmann_const_1} and Eq.~\ref{eq:proof_maxwell_boltzmann_const_2}, we have
\begin{align}
    \int_\Theta e^{-\beta\cdot\mathcal{L}_{CE}(\bm{\theta})}d\bm{\theta}\int_\Theta \mathcal{L}_{CE}(\bm{\theta})e^{-\beta\mathcal{L}_{CE}(\bm{\theta})}d\bm{\theta} = \delta_\gamma. \label{eq:proof_maxwell_boltzmann_const_3}
\end{align}
Consider
\begin{align}
    \Phi(\beta) = \int_\Theta e^{\beta\mathcal{L}_{CE}(-\bm{\theta})}d\bm{\theta}\int_\Theta\mathcal{L}_{CE}(\bm{\theta})e^{-\beta\mathcal{L}_{CE}(\bm{\theta})}d\bm{\theta},
\end{align}
which is an decreasing function of $\beta$.
Since
\begin{align}
    \lim_{\beta\to +\infty}\Phi(\beta) &= 0, \nonumber\\
    \lim_{\beta\to -\infty}\Phi(\beta) &= +\infty,
\end{align}
the continuity of $\Phi(\beta)$ implies that Eq.~\ref{eq:proof_maxwell_boltzmann_const_3} has a solution and this is unique.
Hence, a unique pair $(\alpha,\beta)$ satisfies the constraints of the problem.
Therefore, the Maxwell-Boltzmann distribution is unique.
\end{proof}

This can be regarded as the learning process by focal loss trying to learn a model with parameters that do not deviate too much from the submanifold composed of the family of distributions expressed in Eq.~\ref{eq:maxwell_boltzmann}.
Using Eq.~\ref{eq:maxwell_boltzmann} as the prior of parameter distribution as $\pi = \tilde{p}$, we give the following PAC-Bayes bound by using Thiemann's bound\citep{thiemann2017strongly}.
\begin{corollary}
\label{prop:pac_bayes_maxwell_boltzmann}
Maxwell-Boltzmann distribution in Eq.~\ref{eq:maxwell_boltzmann} induces the following bound.
\begin{align}
    \mathbb{P}\left[\forall{\varsigma}\in P(\Theta), \mathbb{E}_{\bm{\theta}\sim\varsigma}\left[\mathcal{R}(\bm{\theta})\right] \leq J(\zeta, \lambda) + \frac{D_{KL}[\varsigma\|\pi] + \ln\frac{2\sqrt{n}}{\epsilon}}{n\lambda(1-\frac{\lambda}{2})}\right] \leq \epsilon \nonumber\label{eq:pac_bayes_bound}
\end{align}
for some $\epsilon>0$ and $\lambda\geq 0$, where $J(\zeta, \lambda) = \frac{\mathbb{E}_{\bm{\theta}\sim\varsigma}[\hat{\mathcal{R}}(\bm{\theta})]}{1 - \frac{\lambda}{2}}$, $n$ is the sample size of training data and $\mathcal{R}(\bm{\theta}), \hat{\mathcal{R}}(\bm{\theta})$ are expected and empirical error, respectively.
Here, $D_{KL}[\varsigma\|\pi] = \int_{\Theta} \varsigma(\bm{\theta})\ln(\varsigma(\bm{\theta}) / \pi(\bm{\theta}))d\bm{\theta}$ is the Kullback--Leibler divergence (or KL-divergence).
The minimum of the right-hand side of this bound is achieved by
\begin{align}
    \varsigma_\lambda(\bm{\theta}) &= \frac{\pi(\bm{\theta})e^{-\lambda n \hat{\mathcal{R}}(\bm{\theta})}}{\mathbb{E}_{\bm{\theta}\sim\pi}\left[e^{-\lambda n \hat{\mathcal{R}}(\bm{\theta})}\right]},\quad (\text{fixed $\lambda$}),\\
    \lambda &= \frac{2}{\sqrt{\frac{2n \mathbb{E}_\varsigma\left[\hat{\mathcal{R}}(\bm{\theta})\right]}{D_{KL}\left[\varsigma\|\pi\right] + \ln\frac{2\sqrt{n}}{\delta}} + 1} + 1}\quad (\text{fixed $\varsigma$}).
\end{align}
\end{corollary}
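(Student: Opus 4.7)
The plan is to obtain the displayed inequality as a direct instantiation of Thiemann's PAC-Bayes bound, and then to derive the two minimizer statements by standard variational calculus. Thiemann's bound already reads, for any data-independent prior $\pi$ on $\Theta$ and any $\lambda\in(0,2)$,
\begin{align*}
\mathbb{E}_{\bm{\theta}\sim\varsigma}[\mathcal{R}(\bm{\theta})] \leq \frac{\mathbb{E}_{\bm{\theta}\sim\varsigma}[\hat{\mathcal{R}}(\bm{\theta})]}{1-\lambda/2} + \frac{D_{KL}[\varsigma\|\pi] + \ln\frac{2\sqrt{n}}{\epsilon}}{n\lambda(1-\lambda/2)}
\end{align*}
uniformly in $\varsigma\in P(\Theta)$ with probability at least $1-\epsilon$. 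The first step is therefore to verify that the Maxwell--Boltzmann density $\tilde{p}$ from Theorem~\ref{thm:maxwell_boltzmann} is a legitimate prior in the PAC-Bayes sense, i.e. it does not depend on the training sample. This is immediate since $\tilde{p}$ only depends on $\beta,\delta_\gamma$ and on the expected cross-entropy functional $\mathcal{L}_{CE}(\bm{\theta})$ (not on the empirical counterpart). Plugging $\pi=\tilde{p}$ into Thiemann's bound and rewriting the first term as $J(\varsigma,\lambda)$ yields the displayed inequality.

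For the minimizer in $\varsigma$ at fixed $\lambda$, I would drop the constant factor $1/(1-\lambda/2)$ in the outer term, multiply the whole right-hand side by the positive constant $n\lambda(1-\lambda/2)$, and observe that we are minimizing
\begin{align*}
n\lambda\,\mathbb{E}_{\bm{\theta}\sim\varsigma}[\hat{\mathcal{R}}(\bm{\theta})] + D_{KL}[\varsigma\|\pi]
\end{align*}
over $\varsigma\in P(\Theta)$. By the Donsker--Varadhan variational identity, the unique minimizer is the Gibbs posterior proportional to $\pi(\bm{\theta})e^{-\lambda n \hat{\mathcal{R}}(\bm{\theta})}$, and normalizing by $\mathbb{E}_{\bm{\theta}\sim\pi}[e^{-\lambda n \hat{\mathcal{R}}(\bm{\theta})}]$ produces exactly $\varsigma_\lambda$. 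An alternative derivation writes the objective as $D_{KL}[\varsigma\|\varsigma_\lambda]$ plus a term independent of $\varsigma$, which is nonnegative and vanishes iff $\varsigma=\varsigma_\lambda$.

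For the minimizer in $\lambda$ at fixed $\varsigma$, I would set $A=\mathbb{E}_{\bm{\theta}\sim\varsigma}[\hat{\mathcal{R}}(\bm{\theta})]$ and $B=D_{KL}[\varsigma\|\pi]+\ln\frac{2\sqrt{n}}{\epsilon}$, so the right-hand side becomes
\begin{align*}
f(\lambda) = \frac{A}{1-\lambda/2} + \frac{B}{n\lambda(1-\lambda/2)} = \frac{nA\lambda + B}{n\lambda(1-\lambda/2)}.
\end{align*}
Differentiating $f$ and setting $f'(\lambda)=0$ gives a quadratic in $\lambda$ whose positive root in $(0,2)$ simplifies to the stated expression. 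The substitution $u = 2/\lambda - 1$ is the cleanest way to carry this out, since $\lambda(1-\lambda/2)$ factors neatly and the optimality condition collapses to $u^2 = 1 + 2nA/B$, which inverts to $\lambda = 2/(\sqrt{1 + 2nA/B}+1)$, matching the claim.

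The main obstacle is mostly bookkeeping rather than conceptual: one must keep the roles of $\varsigma$ and $\pi$ straight (only $\pi$ can depend on $\tilde{p}$, not on data), confirm that $\tilde{p}$ has finite normalizing constant so that $D_{KL}[\varsigma\|\pi]$ is well defined on a sufficiently large class of posteriors, and verify that the stationary point of $f(\lambda)$ does fall in $(0,2)$ and is a minimum (it is, because $f\to\infty$ at both endpoints). Once these regularity checks are in place, the three claims follow, respectively, from Thiemann's theorem, the Donsker--Varadhan identity, and elementary single-variable optimization.
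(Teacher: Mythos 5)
Your proposal is correct and matches what the paper itself does: the paper provides no explicit proof of this corollary, but simply states it as an instantiation of Thiemann's bound with the Maxwell--Boltzmann density as prior $\pi=\tilde p$. You have filled in the details the paper delegates to the citation --- the data-independence check on $\tilde p$ (which indeed only involves the population cross-entropy, not the empirical one), the Donsker--Varadhan argument for the optimal Gibbs posterior $\varsigma_\lambda$, and the substitution $u=2/\lambda-1$ that collapses the stationarity condition to $u^2 = 1 + 2nA/B$ --- and these are exactly the calculations that appear in Thiemann et al. Two small remarks: the paper's displayed inequality has the direction of the probability statement reversed (it should read that the bound holds with probability at least $1-\epsilon$, not that $\mathbb{P}[\cdot]\le\epsilon$), and the formula for the optimal $\lambda$ mixes $\delta$ and $\epsilon$; your derivation uses the intended, consistent form, which is the right thing to do.
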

This upper bound can be decomposed into an expected error term and a KL-divergence term.
We can see that if the estimators are consistent, the first term approaches a minimum as $n\to\infty$, and the second term also vanishes because the denominator includes the sample size.
On the other hand, for a fixed $n$, the bound can be made tighter by reducing the KL-divergence in the second term.
This induces the following regularizer naturally.
\begin{corollary}
    Focal Loss equips the following regularizer:
    \begin{align}
        D_{KL}[\varsigma\|\pi] &= \int_\Theta \varsigma(\bm{\theta})\ln\frac{\varsigma(\bm{\theta})}{c\cdot\exp\left\{\beta\int_{\mathcal{X}}\sum^{|\mathcal{Y}|}_{y=1}q(\bm{x},y)\ln p(y|\bm{x};\bm{\theta})d\bm{x}\right\}}d\bm{\theta}. \nonumber
    \end{align}
\end{corollary}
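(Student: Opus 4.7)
The plan is to observe that the regularizer in the statement is nothing more than the KL-divergence term that already appears in the PAC-Bayes bound of Corollary~\ref{prop:pac_bayes_maxwell_boltzmann}, once the prior $\pi$ is instantiated to the Maxwell-Boltzmann distribution $\tilde{p}$ of Theorem~\ref{thm:maxwell_boltzmann}. First, I would appeal to the chain of reasoning running from Eq.~\ref{eq:focal_loss_and_cross_entropy} through Eq.~\ref{eq:focal_loss_constraint}: focal loss is reformulated as a constrained maximization of conditional entropy, and Theorem~\ref{thm:maxwell_boltzmann} singles out $\tilde{p}(\bm\theta) = \alpha\exp\{\beta\int_\mathcal{X}\sum_{y=1}^{|\mathcal{Y}|} q(y\mid\bm x)\ln p(y\mid\bm x;\bm\theta)\,d\bm x\}$ as the unique distribution saturating that objective. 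This is therefore the natural prior over $\Theta$ to plug into the PAC-Bayes machinery, as was already done in stating Corollary~\ref{prop:pac_bayes_maxwell_boltzmann}.

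Second, I would simply unfold the definition
$$D_{KL}[\varsigma\|\pi] = \int_\Theta \varsigma(\bm\theta)\ln\frac{\varsigma(\bm\theta)}{\pi(\bm\theta)}\,d\bm\theta$$
and substitute $\pi = \tilde{p}$ from Eq.~\ref{eq:maxwell_boltzmann}. Relabelling the normalizing factor $\alpha$ as $c$, I arrive at the displayed formula. Because the PAC-Bayes bound in Corollary~\ref{prop:pac_bayes_maxwell_boltzmann} is tightened by shrinking this KL-divergence, the resulting functional acts as a data-dependent regularizer \emph{induced} by focal loss through its Maxwell-Boltzmann prior, which is the statement's claim.

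The only genuine subtlety, and the step I expect to be the main obstacle in writing this cleanly, is the notational reconciliation between the conditional $q(y\mid\bm x)$ appearing in Eq.~\ref{eq:maxwell_boltzmann} and the joint $q(\bm x, y)$ appearing in the corollary's target expression. I would resolve this by treating the integral $\int_\mathcal{X}$ implicitly as an expectation against the underlying (or empirical) data distribution $p(\bm x)$, so that $q(\bm x, y) = q(y\mid\bm x)p(\bm x)$ and the factor $p(\bm x)$ can be absorbed either into the outer integration measure or into the constant $\beta$. With this identification made explicit, the proof is a one-line substitution and no further analytic work is required; the content of the corollary lies in the identification itself, not in the calculation.
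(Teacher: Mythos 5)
Your proposal is correct and takes essentially the same route as the paper: the corollary is a one-line substitution of the Maxwell--Boltzmann prior $\pi=\tilde p$ from Eq.~\ref{eq:maxwell_boltzmann} into the definition of $D_{KL}[\varsigma\|\pi]$ that appears in the PAC-Bayes bound of Corollary~\ref{prop:pac_bayes_maxwell_boltzmann}, with the normalizer $\alpha$ relabelled $c$. You also rightly flag the paper's notational inconsistency between $q(y\mid\bm x)$ in Eq.~\ref{eq:maxwell_boltzmann} and $q(\bm x,y)$ in the corollary; your resolution of folding the marginal $p(\bm x)$ into the $d\bm x$ measure is the sensible one, but note that the alternative you mention --- absorbing $p(\bm x)$ into the constant $\beta$ --- does not work, since $p(\bm x)$ varies with $\bm x$ while $\beta$ is a fixed scalar.
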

KL-divergence for small changes in parameters can be expressed using the Fisher information matrix.
Here, it is known that the space composed of probability distributions is a Riemannian manifold with parameters as the coordinate system, from the framework of information geometry~\citep{amari2000methods,amari2016information}.
On this Riemannian manifold, the Fisher information matrix behaves as a metric, characterizing the curvature of the manifold.
From the above discussion, we can characterize the geometric behavior of focal loss as follows.
\begin{theorem}
    \label{thm:sharpness_focal_loss}
    Learning with the focal loss reduces the local sharpness of the likelihood, if prior $\pi$ and posterior $\varsigma$ are close enough.
\end{theorem}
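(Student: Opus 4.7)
The plan is to translate the PAC-Bayes regularizer isolated in the preceding corollary into a statement about the Fisher information matrix, and then use the information-geometric identification of this matrix with the local curvature of the log-likelihood. Starting point: by Corollary 2, the focal loss induces the regularizer $D_{KL}[\varsigma\|\pi]$ with $\pi$ equal to the Maxwell--Boltzmann distribution of Theorem 1. So whatever optimization focal loss performs, the bound tells us it tightens when this KL term shrinks.

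Next, I would exploit the closeness hypothesis. Assume $\varsigma$ is concentrated near a parameter $\bm{\theta}_0$ at which $\pi$ is also concentrated, and write $\bm{\theta} = \bm{\theta}_0 + \Delta\bm{\theta}$ with $\|\Delta\bm{\theta}\|$ small. The standard second-order expansion of KL-divergence between two members of a smooth parametric family (see \citealt{amari2016information}) gives
\begin{align*}
    D_{KL}[\varsigma\|\pi] \;=\; \tfrac{1}{2}\,\Delta\bm{\theta}^{\top} F(\bm{\theta}_0)\,\Delta\bm{\theta} \;+\; O(\|\Delta\bm{\theta}\|^{3}),
\end{align*}
where $F(\bm{\theta}_0) = -\mathbb{E}_{(\bm{x},y)}[\nabla^{2}_{\bm{\theta}}\ln p(y\mid\bm{x};\bm{\theta})]\big|_{\bm{\theta}_0}$ is the Fisher information matrix at $\bm{\theta}_0$. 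Under the information-geometric framework, $F$ is precisely the Riemannian metric on the statistical manifold, and by the identity above it coincides with (minus) the expected Hessian of the log-likelihood, i.e.\ it is the canonical measure of the local sharpness of $\ln p(y\mid\bm{x};\bm{\theta})$.

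Combining these two observations, I would argue that making the PAC-Bayes bound tighter via focal loss forces the quadratic form $\Delta\bm{\theta}^{\top} F(\bm{\theta}_0)\Delta\bm{\theta}$ to be small along the training trajectory. When $\pi$ and $\varsigma$ are close so that $\Delta\bm{\theta}$ cannot be made arbitrarily small without collapsing the posterior, this pressure is absorbed by the spectrum of $F(\bm{\theta}_0)$: its eigenvalues must decrease. Since these eigenvalues are exactly the principal curvatures of the log-likelihood surface, the local sharpness is reduced, which is the claim of Theorem~\ref{thm:sharpness_focal_loss}.

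The main obstacle I expect is the last implication — disentangling ``reducing $\Delta\bm{\theta}^{\top} F \Delta\bm{\theta}$'' from ``reducing $F$'' rather than ``reducing $\Delta\bm{\theta}$''. This is precisely why the hypothesis ``$\pi$ and $\varsigma$ close enough'' is needed: it bounds $\|\Delta\bm{\theta}\|$ below in a qualitative sense, so the only remaining degree of freedom to shrink the quadratic form is to flatten $F$, i.e.\ the log-likelihood itself. I would make this explicit by arguing on the directions $\Delta\bm{\theta}/\|\Delta\bm{\theta}\|$, concluding that the Rayleigh quotient $\Delta\bm{\theta}^{\top} F \Delta\bm{\theta}/\|\Delta\bm{\theta}\|^{2}$, and hence the local curvature of $\ln p$ in the relevant directions, is driven down by focal-loss training.
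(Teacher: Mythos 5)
Your proposal follows the same path as the paper: expand $D_{KL}[\varsigma\|\pi]$ to second order, identify the Hessian of that expansion with a Fisher information matrix, and substitute back into the PAC-Bayes bound of Corollary~1 so that the KL penalty becomes a Fisher-metric quadratic form. Two observations. First, the paper performs the expansion in the hyperparameter $\bm{\xi}$ indexing a family $p(\bm{\theta};\bm{\xi})$ of distributions over $\Theta$, so the matrix that emerges is $\mathfrak{g}_{ij}(\bm{\xi}) = -\mathbb{E}_{\bm{\xi}}\!\left[\partial^2 \ln p(\bm{\theta};\bm{\xi})/\partial\xi_i\partial\xi_j\right]$, the information of the prior/posterior family over parameter space, rather than the model Fisher $-\mathbb{E}_{(\bm{x},y)}\!\left[\nabla^2_{\bm{\theta}}\ln p(y\mid\bm{x};\bm{\theta})\right]$ that you write down; the paper's identification of the former with ``the curvature of the log-likelihood'' is made only informally in the final sentence, and your version actually aims more directly at the quantity the theorem is about. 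Second, the obstacle you flag at the end --- that shrinking $\Delta^{\top}F\Delta$ might be achieved by shrinking $\Delta$ rather than $F$ --- is not resolved in the paper either: the published proof stops once the quadratic form is exhibited and never disentangles the two effects, so your concern about this last implication is warranted, but it is a weakness of the original argument rather than of your reconstruction.
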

\begin{proof}
From assumption, let
\begin{align}
    \Delta\xi_i = \xi^\pi_i - \xi^\varsigma_i,
\end{align}
where $\bm{\xi}^\varsigma$ and $\bm{\xi}^\pi$ are parameters of $\varsigma$ and $\pi$.
Consider the quadratic approximation of $d_{kl}(\bm{\xi}) = D_{KL}[\bm{\xi}^\varsigma\|\bm{\xi}]$ as
\begin{align}
    d_{kl}(\bm{\xi}) &= d_{kl}(\bm{\xi}^\varsigma) + \sum_{i}\frac{\partial d_{kl}}{\partial\xi_i}(\xi^\varsigma)\Delta\xi_i \nonumber \\
    &+ \frac{1}{2}\sum_{i,j}\frac{\partial^2 d_{kl}}{\partial\xi_i\xi_j}(\bm{x}^\pi)\Delta\xi_i\Delta\xi_j - o(\|\Delta\bm{\xi}\|^2).
\end{align}
First, since $D_{KL}[p\|q]=0$ if and $p=q$, we have
\begin{align}
    d_{kl}(\bm{\xi}^\varsigma) = D_{KL}[\bm{\xi}^\varsigma\|\bm{\xi}^\varsigma] = 0.
\end{align}
Next, diagonal part of the first variation of the KL-divergence is
\begin{align}
    \frac{\partial}{\partial\xi_i}D_{KL}[\bm{\xi}^\varsigma\|\bm{\xi}] &= \frac{\partial}{\partial\xi_i}\int_{\Theta}p(\bm{\theta};\bm{\xi}^\varsigma)\ln p(\bm{\theta};\bm{\xi}^\varsigma)d\bm{\theta} \nonumber\\
    &- \frac{\partial}{\partial\xi_i}\int_\Theta p(\bm{\theta};\bm{\xi}^\varsigma)\ln p(\bm{\theta};\bm{\xi})d\bm{\theta} \nonumber\\
    &=-\int_\Theta p(\bm{\theta};\bm{\xi}^\varsigma)\frac{\partial}{\partial\xi_i}\ln p(\bm{\theta};\bm{\xi})d\bm{\theta}. \\
    \frac{\partial}{\partial\xi_i}D_{KL}[\bm{\xi}^\varsigma\|\bm{\xi}]_{|_{\bm{\xi}=\bm{\xi}^\varsigma}} &= -\int_\Theta p(\bm{\theta};\bm{\xi}^\varsigma)\frac{\partial}{\partial\xi_i}\ln p(\bm{\theta};\bm{\xi}^\varsigma)d\bm{\theta} \nonumber\\
    &= -\mathbb{E}_{\bm{\xi}^\varsigma}\left[\frac{\partial}{\partial\xi_i}\ln p(\bm{\theta};\bm{\xi}^\varsigma)\right] = 0.
\end{align}
Finally, the diagonal part of the Hessian of the KL-divergence is
\begin{align}
    \frac{\partial^2}{\partial\xi_i\partial\xi_j}D_{KL}[\bm{\xi}^\varsigma\|\bm{\xi}] &= \frac{\partial^2}{\partial\xi_i\xi_j}\int_\Theta p(\bm{\theta};\bm{\xi}^\varsigma)\ln p(\bm{\theta};\bm{\xi}^\varsigma)d\bm{\theta} \nonumber\\
    &- \int_\Theta p(\bm{\theta};\bm{\xi}^\varsigma)\ln p(\bm{\theta};\bm{\xi})d\bm{\theta} \nonumber \\
    &= -\int_\Theta p(\bm{\theta};\bm{\xi}^\varsigma)\frac{\partial^2}{\partial\xi_i\partial\xi_j}\ln p(\bm{\theta};\bm{\xi})d\bm{\theta}. \nonumber\\
    \frac{\partial^2}{\partial\xi_i\partial\xi_j}D_{KL}[\bm{\xi}^\varsigma\|\bm{\xi}]_{|_{\bm{\xi}=\bm{\xi}^\varsigma}} &= -\int_\Theta p(\bm{\theta};\bm{\xi}^\varsigma)\frac{\partial^2}{\partial\xi_i\partial\xi_j}\ln p(\bm{\theta};\bm{\xi})d\bm{\theta} \nonumber \\
    &= -\mathbb{E}_{\bm{\xi}^\varsigma}\left[\frac{\partial^2}{\partial\xi_i\partial\xi_j}\ln p(\bm{\theta};\bm{\xi}^\varsigma)\right] \nonumber\\
    &= \mathfrak{g}_{ij}(\bm{\xi}^\varsigma),
\end{align}
where $I(\bm{\xi})=(\mathcal{g}_{ij}(\bm{\xi}))$ is the Fisher information matrix.
Then, we have
\begin{align}
    D_{KL}[\varsigma\|\pi] = \frac{1}{2}\sum_{i,j}g_{ij}(\bm{\xi}^\varsigma)\Delta\xi_i\Delta\xi_j + o(\|\Delta\bm{\xi}\|^2). \label{eq:kl_fisher}
\end{align}
From Eq.~\ref{eq:kl_fisher} and Eq.~\ref{eq:pac_bayes_bound},
\small
\begin{align}
    \mathbb{P}\left[\forall{\varsigma}\in P(\Theta), \mathbb{E}_{\bm{\theta}\sim\varsigma}\left[\mathcal{R}(\bm{\theta})\right] \leq J(\zeta, \lambda) + \frac{\frac{1}{2}g_{ij}(\bm{\xi}^\varsigma)\Delta\xi^i\Delta\xi^j + \ln\frac{2\sqrt{n}}{\epsilon}}{n\lambda(1-\frac{\lambda}{2})} \right] \leq \epsilon \nonumber
\end{align}
\normalsize
with $o(\|\Delta\bm{\xi}\|^2)$, and the second term of the right-hand side measures the curvature of the log-likelihood.
\end{proof}

\begin{figure*}[ht]
    \centering	
    \includegraphics[width=0.49\linewidth]{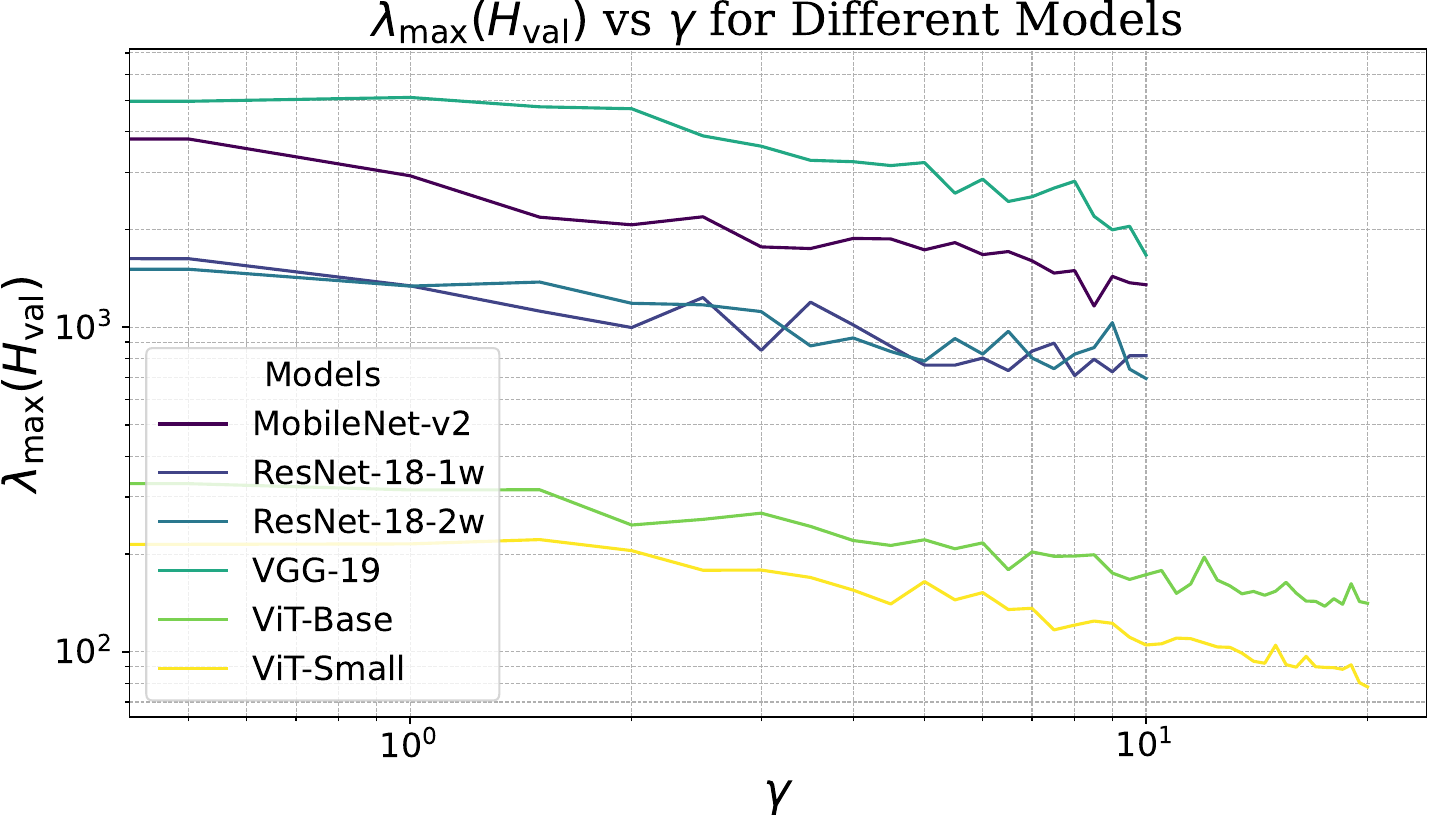}
    \includegraphics[width=0.49\linewidth]{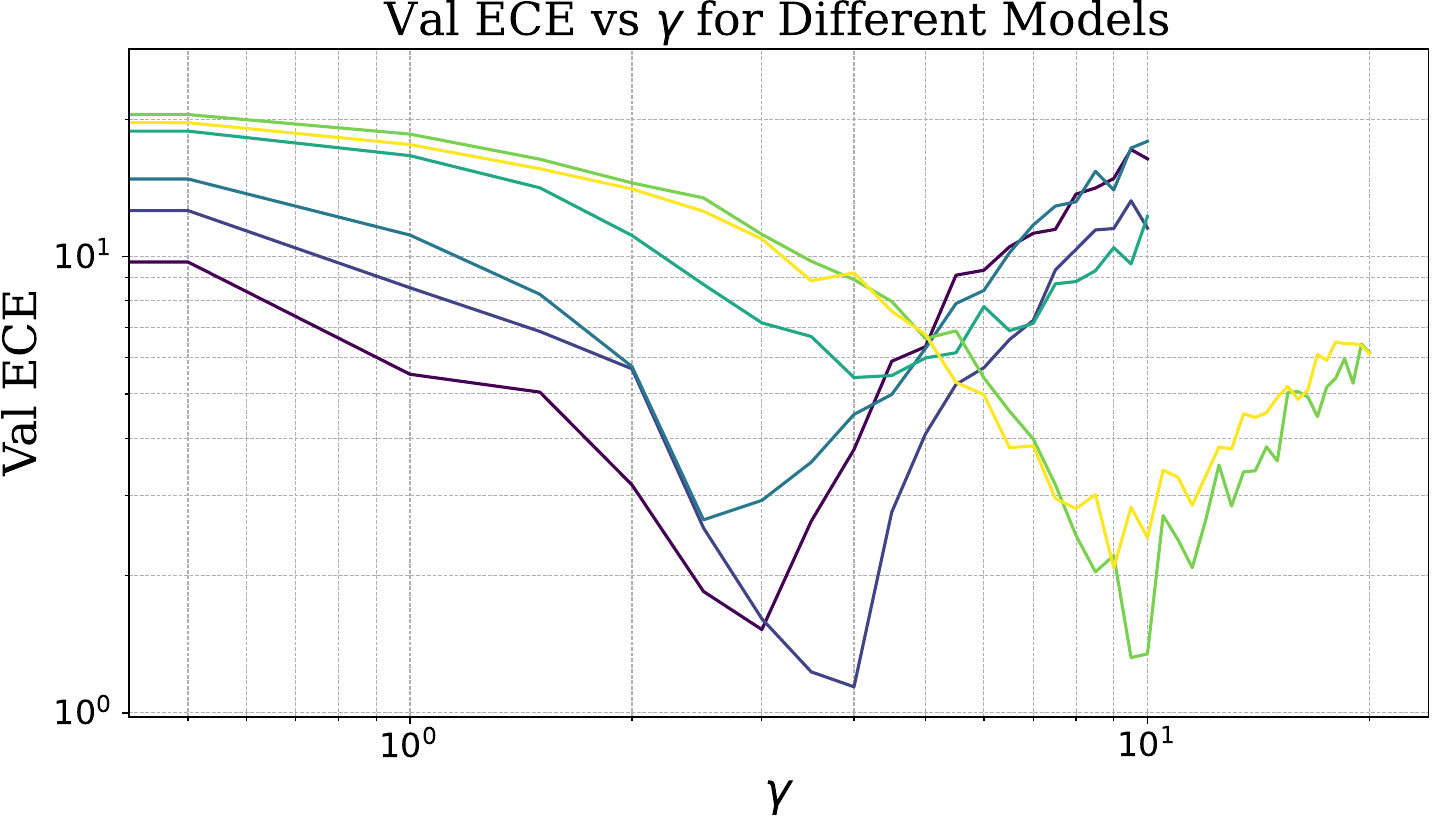}

    \caption{Changes in $\lambda_\text{max}(H_\text{val})$ and expected calibration error (ECE) with respect to hyperparameter $\gamma$ in focal loss training on CIFAR100 using different model architectures. For all architectures, $\lambda_\text{max}(H_\text{val})$ monotonically decreases with increasing $\gamma$, consistent with \cref{thm:sharpness_focal_loss}. ECE reaches its minimum for ViT architecture around $\gamma$ = 10 and for other architectures around $\gamma = 3,4$ . These results confirm the importance of curvature regularization through appropriate focal loss for achieving low ECE.}
    \label{fig:focal_loss_ece}
\end{figure*}

Here, we can also relate focal loss to curvature from another path.
The Taylor expansion of the negative log-likelihood function $\ell(\bm{\theta}) = p_{\bm{\theta}} \coloneqq -\ln p(\cdot;\bm{\theta})$ around the maximum likelihood estimator $\bm{\theta}_0$ is given by
\begin{align}
    \ell(\bm{\theta}) \approx \ell(\bm{\theta}_0) + \frac{1}{2}(\bm{\theta}-\bm{\theta}_0)^\top\frac{\partial^2}{\partial\bm{\theta}^\top\partial\bm{\theta}}\ell(\bm{\theta})\Big|_{\bm{\theta}_0}(\bm{\theta} - \bm{\theta}_0),
\end{align}
and second term corresponds to the curvature around $\bm{\theta}_0$, and is the Fisher information matrix.
In addition, the expansion of $\ell_{FL}(\bm{\theta}) = -(1 - p_{\bm{\theta}})^\gamma \ln p_{\bm{\theta}} \coloneqq -(1 - p(\cdot;\bm{\theta}))^\gamma \ln p(\cdot;\bm{\theta})$ yields the following curvature term:
\begin{align}
    \frac{(1-p_{\bm{\theta}_0})^\gamma}{2}(\bm{\theta} - \bm{\theta}_0)^\top\frac{\partial^2}{\partial\bm{\theta}^\top\partial\bm{\theta}}\ell(\bm{\theta})\Big|_{\bm{\theta_0}}(\bm{\theta} - \bm{\theta}_0). \label{eq:taylor_expansion_of_focal_loss}
\end{align}
Since $(1 - p_{\bm{\theta}}) \geq 0$ and $\gamma \geq 0$, the role of the parameter $\gamma$ is to control the curvature.
In fact, the larger $\gamma$ makes the curvature term smaller.
Theorem~\ref{thm:sharpness_focal_loss} and Eq.~\ref{eq:taylor_expansion_of_focal_loss} both suggest that focal loss behaves as curvature reduction from different approaches.
Therefore, we can see that
\begin{itemize}
    \item focal loss can be regarded as reducing the curvature of the loss function, and
    \item given that existing studies have reported that focal loss is effective for model calibration, it can be expected that curvature reduction is effective for model calibration.
\end{itemize}

The second is to be restated as the following conjecture.
\begin{conjecture}
    \label{cnj:focal_gamma_curvature}
    The Focal loss parameter $\gamma$ controls the curvature and calibration performance.
\end{conjecture}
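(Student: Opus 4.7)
The plan is to split the conjecture into its two implicit claims and tackle them in order: first that $\gamma$ controls the curvature of the loss surface, then that this curvature in turn governs calibration performance (measured, e.g., by ECE). The first claim is already essentially in hand from Theorem~\ref{thm:sharpness_focal_loss} and Eq.~\ref{eq:taylor_expansion_of_focal_loss}. To make it quantitative, I would define a local curvature functional $\kappa(\gamma) := \lambda_{\max}\bigl(\nabla^2_{\bm{\theta}} \mathcal{L}_{FL}(\bm{\theta};\bm{x},\gamma)\bigr)$ at an approximate optimum $\bm{\theta}_0$ and differentiate the Hessian in Eq.~\ref{eq:taylor_expansion_of_focal_loss} with respect to $\gamma$. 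Since $1 - p_{\bm{\theta}_0} \in [0,1)$, we have $\partial_\gamma (1 - p_{\bm{\theta}_0})^\gamma = (1 - p_{\bm{\theta}_0})^\gamma \ln(1 - p_{\bm{\theta}_0}) \leq 0$, so the dominant Hessian contribution is monotonically non-increasing in $\gamma$ whenever the model is not already saturated. This yields the first half of the conjecture rigorously.

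The second and harder half—linking curvature to calibration—I would route through the PAC-Bayes bound in Corollary~\ref{prop:pac_bayes_maxwell_boltzmann} combined with Eq.~\ref{eq:kl_fisher}. Since $D_{KL}[\varsigma\|\pi] \approx \tfrac{1}{2}\sum_{i,j} g_{ij}(\bm{\xi}^\varsigma)\Delta\xi_i \Delta\xi_j$, a posterior concentrated in a flatter region of the log-likelihood (smaller $g_{ij}$) admits a tighter generalization certificate and, under the Maxwell--Boltzmann prior of Theorem~\ref{thm:maxwell_boltzmann}, naturally has higher conditional entropy $H(y\mid\bm{x};\bm{\theta})$. The next step is to bound the Expected Calibration Error by a functional of predictive confidence: one can show $\mathrm{ECE}(\bm{\theta}) \leq \bigl|\mathbb{E}[\max_y p(y\mid\bm{x};\bm{\theta})] - \mathrm{acc}(\bm{\theta})\bigr| + \text{bin error}$, and relate $\mathbb{E}[\max_y p(y\mid\bm{x};\bm{\theta})]$ to $H(y\mid\bm{x};\bm{\theta})$ via a convexity inequality (for instance, $\max_y p(y\mid\bm{x};\bm{\theta}) \leq 1 - \tfrac{1}{\ln|\mathcal{Y}|} H(y\mid\bm{x};\bm{\theta})$ on simplex regions where accuracy is fixed). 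Composing these two inequalities with the curvature bound from the first half gives a conditional implication of the form ``larger $\gamma \Rightarrow$ lower curvature $\Rightarrow$ lower overconfidence gap $\Rightarrow$ lower ECE.''

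The main obstacle I anticipate is the last inequality in that chain: low curvature does not unconditionally imply low ECE, because a flat landscape can still coexist with systematically overconfident predictions if the flatness is aligned with directions that do not affect the softmax output. Concretely, the map from parametric curvature to predictive entropy is only surjective onto a constrained subset of posterior predictive distributions, and controlling ECE requires that the curvature reduction manifest in the output-relevant directions of $\bm{\theta}$. I would handle this either by (i) restricting to a neighborhood of a calibrated predictor and invoking a Lipschitz argument for the softmax to transfer parameter-space flatness to output-space smoothness, or (ii) acknowledging that beyond a threshold $\gamma^\star$ the reduction of curvature causes underconfidence and ECE rises again—which is precisely the non-monotone U-shape visible in Figure~\ref{fig:focal_loss_ece}. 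Because this U-shape means the statement cannot be a strict monotone theorem, the natural outcome is to retain the claim as a conjecture, supported rigorously on the curvature side and empirically on the calibration side.
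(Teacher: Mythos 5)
You have correctly identified the structure of the claim: the paper itself does not prove this as a theorem but advances it as a conjecture, supported on the curvature side by Theorem~\ref{thm:sharpness_focal_loss} and the Taylor expansion in Eq.~\ref{eq:taylor_expansion_of_focal_loss}, and on the calibration side by prior empirical reports and the experiments in Figures~\ref{fig:focal_loss_ece} and~\ref{fig:correlation}. Your conclusion that the statement must remain a conjecture because the curvature-to-ECE implication is not unconditional (the observed non-monotonicity in $\gamma$) matches the paper's own framing, and your treatment of the curvature half via the factor $(1-p_{\bm{\theta}_0})^\gamma$ is the same device the paper uses.

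That said, two of the intermediate inequalities you sketch for the calibration half do not hold and should not survive into any attempted formalization. First, the claimed bound $\max_y p(y\mid\bm{x};\bm{\theta}) \leq 1 - \tfrac{1}{\ln|\mathcal{Y}|} H(y\mid\bm{x};\bm{\theta})$ is false: at the uniform distribution the right-hand side equals $0$ while the left-hand side equals $1/|\mathcal{Y}|>0$. The correct Fano-type relations give an upper bound on $H$ in terms of the top probability (equivalently, a lower bound on confidence given entropy), which is the opposite direction from what your argument needs. Second, the quantity $\bigl|\mathbb{E}[\max_y p(y\mid\bm{x};\bm{\theta})] - \text{accuracy}\bigr|$ is a lower bound on ECE, not an upper bound: summing over confidence bins and applying the triangle inequality gives $\mathrm{ECE} \geq \bigl|\text{average confidence} - \text{average accuracy}\bigr|$, so a single-bin confidence gap cannot control ECE from above. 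Even with a sound entropy link, your chain would therefore bound ECE from below, which cannot establish that reduced curvature yields smaller ECE. A further conflation to watch is that Corollary~\ref{prop:pac_bayes_maxwell_boltzmann} controls expected risk, not calibration error, so flatness tightening the PAC-Bayes certificate does not by itself say anything about ECE. These are precisely the obstructions that keep the claim a conjecture, so your final verdict is right, but the concrete bridge you propose would need to be replaced rather than merely tightened.
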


\subsection{Characterization of Curvature}
There are several approaches to characterizing the curvature of loss surfaces.
The most straightforward are the Hessian $H_{\bm{\theta}}$ of a function and its contractions.
As contractions, traces $\mathrm{Tr}(H_{\bm{\theta}})$ and maximum eigenvalues $\lambda_{\max}(H_{\bm{\theta}})$ are often used.

As another characterization of curvature, the following concept of continuity can be utilized.
\begin{definition}[Lipschitz continuity~\citep{sohrab2003basic}]
    A function $f\colon\mathcal{X}\to\mathbb{R}$ is called Lipschitz continuous on $\mathcal{X}$ with Lipschitz constant $L$ if
    \begin{align}
        \left|f(\bm{u}) - f(\bm{v})\right| \leq L\left\|\bm{u} - \bm{v}\right\| \quad \forall \bm{u},\bm{v}\in\mathcal{X}.
    \end{align}
\end{definition}
In general, if the curvature is small, it means that changes in the neighborhood are relatively small.
This leads to a small Lipschitz constant.

Further, consider the gradient norm $\|\nabla f\|$.
This quantity can be regarded as quantifying the speed of change of the function, which means that the gradient norm is small when the Lipschitz constant is small.
Here, it is important to note the effect of the focal loss parameter $\gamma$ on its gradient.
Let $g(p, \gamma)$ be the gradient of focal loss with $\gamma$, and its explicit form can be written as
\begin{align}
    g(p, \gamma) = (1-p)^\gamma(\gamma p(1-p)^{\gamma - 1} - (1-p)^\gamma \ln p),
\end{align}
where $p$ is the classification probability.
Consider the left-hand limit of $g(p, \gamma)$ for $p$ and $\gamma$ as
\begin{align}
    \lim_{p\to 1^-} g(p, \gamma) = \begin{cases}
    0 & (\gamma = 0\ \text{or}\ 0.5 < \gamma), \\
    0.5 & (\gamma = 0.5), \\
    +\infty & (0 < \gamma < 0.5),
    \end{cases}
\end{align}
and
\begin{align}
    \lim_{\gamma\to 0.5}g(p, \gamma) = 0.5p - \ln p + \gamma \ln p.
\end{align}
This can be confirmed numerically from Fig.~\ref{fig:focal_loss_gradient}.
Thus, it can be seen that $\gamma > 0.5$ is suggested to be desirable.
In our numerical experiments, the hyperparameters are configured according to this suggestion.

\section{Numerical Experiments}
\label{sec:numerical_experiments}
In this section, we design numerical experiments to answer the following questions and confirm our results.
\begin{itemize}
    \item[i)] Does focal loss behave as curvature reduction? (\cref{fig:focal_loss_ece})
    \item[ii)] Can we find a connection between curvature and model calibration performance? (\cref{fig:correlation})
    \item[iii)] Is explicit curvature regularization effectively improving calibration? (\cref{fig:explicit_reg})
\end{itemize}

\subsection{Focal Loss and Curvature Reduction}

\Cref{fig:focal_loss_ece} shows the changes in the maximum eigenvalue of the Hessian, $\lambda_\text{max}(H_\text{val})$ , with respect to the hyperparameter $\gamma$ in focal loss training. Here, $H_\text{val}$ denotes the Hessian of the validation loss. As predicted by our theoretical analysis (Theorem~\ref{thm:sharpness_focal_loss}), $\lambda_\text{max}(H_\text{val})$ consistently decreases across all model architectures as $\gamma$ increases. This confirms that focal loss indeed reduces the curvature of the loss surface.

\subsection{Curvature and Calibration Performance}

\begin{figure}[ht]
    \centering	
    \includegraphics[width=0.5\linewidth]{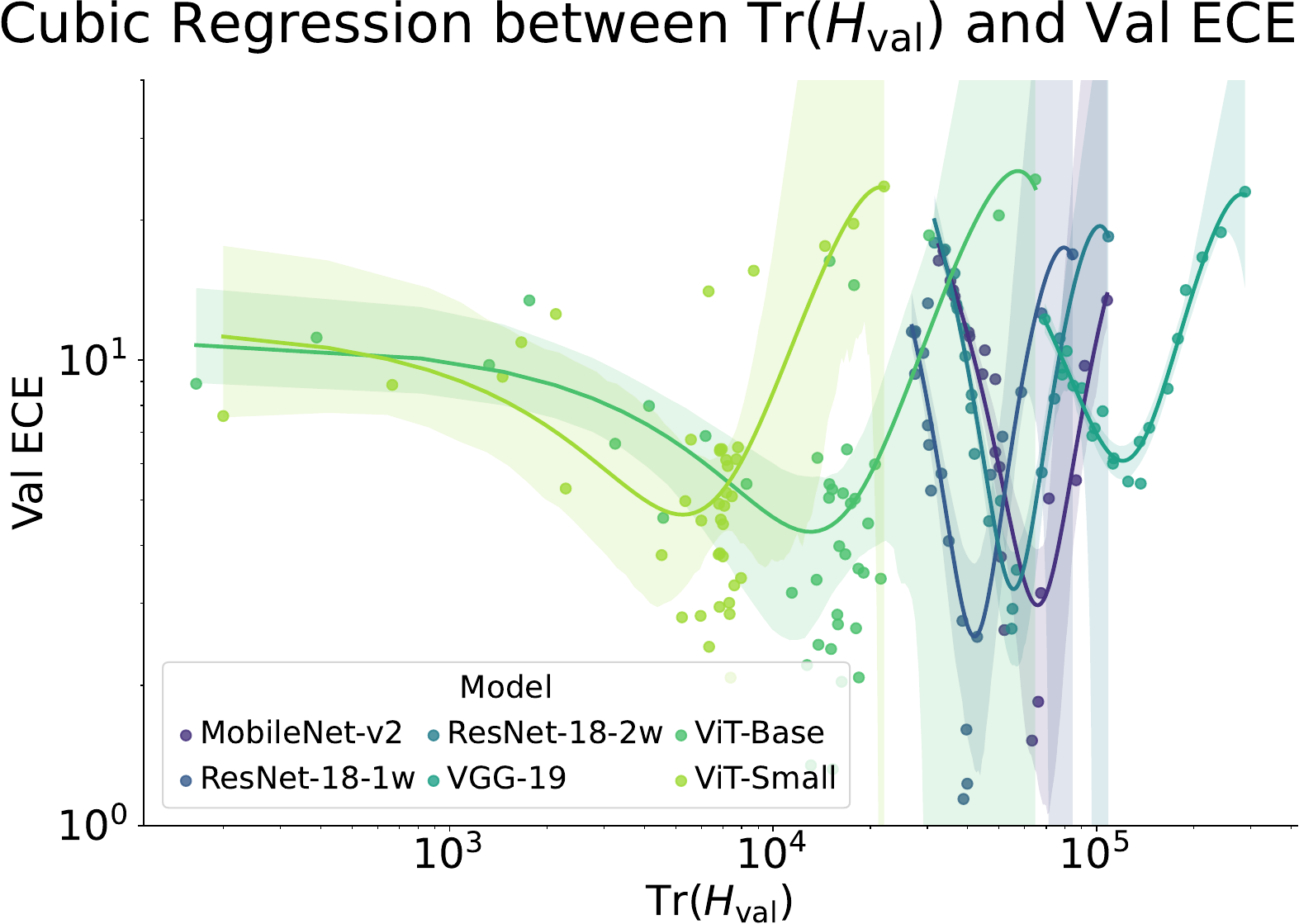}
    
    \caption{Relationship between $\text{Tr}(H_\text{val})$ and expected calibration error (ECE) in focal loss training on CIFAR100 using various DNN architectures. Data points of the same color represent different values of $\gamma$. 
    ECE reaches a minimum peak for all network architectures when $\text{Tr}(H_\text{val})$ is reduced to a certain extent rather than maximizing it. When $\text{Tr}(H_\text{val})$ is too low, training converges to a point significantly different from the convergence point in cross-entropy, resulting in ECE degradation. This indicates that applying appropriate regularization to reduce $\text{Tr}(H_\text{val})$ to a certain extent is crucial for minimizing ECE.}
    \label{fig:correlation}
\end{figure}

\Cref{fig:correlation} explores the relationship between the trace of the Hessian, $\text{Tr}(H_\text{val})$, and the Expected Calibration Error (ECE) during focal loss training on CIFAR-100~\citep{Krizhevsky09learningmultiple} with various DNN architectures\footnote{ResNet18-1w denotes original ResNet18 \citep{he2016deep}, for ResNet18-2w, it denotes doubled width ResNet18.We also use MobileNet-v2~\citep{sandler2018mobilenetv2}, ViT~\citep{dosovitskiy2020image}, and VGG-19~\citep{simonyan2014very}.}. Data points with the same color represent experiments using different values of $\gamma$. The results show that for all network architectures, ECE achieves a minimum value when $\text{Tr}(H_\text{val})$ is reduced to a certain extent, rather than when it's maximized. However, excessively low values of $\text{Tr}(H_\text{val})$ also lead to performance degradation in ECE. This suggests that applying an appropriate degree of regularization to reduce the curvature (reflected by $\text{Tr}(H_\text{val})$) is crucial for achieving optimal calibration performance.

\subsection{Explicit Curvature Regularization}

Focal loss has been demonstrated to improve ECE by reducing curvature to a certain extent. A natural question arises: can we achieve similar improvements in ECE by explicitly minimizing curvature? To address this question, we introduce an experimental protocol based on explicit curvature regularization via Hessian trace minimization.
If explicit regularization of Hessian trace leads to improvement in ECE, then this supports the conjecture that curvature is one of the key factors in model calibration performance.

To implement this approach, we incorporate a regularization term into the loss function that penalizes large values of $\text{Tr}(H)$. Specifically, we introduce a hyperparameter $\tau$ that controls the strength of the regularization:
\begin{align}
    \mathcal{L}_\text{explicit regularization}(\bm{\theta}; \bm{x}, \tau) = \mathcal{L}_{CE}(\bm{\theta}; \bm{x}) + \tau \cdot \text{Tr}(H). \label{eq:explicit_regularization}
\end{align}
At $\tau=0$, Eq.~\ref{eq:explicit_regularization} recovers the original cross entropy loss.

\begin{figure}[tb]
    \centering	
    \includegraphics[width=0.48\linewidth]{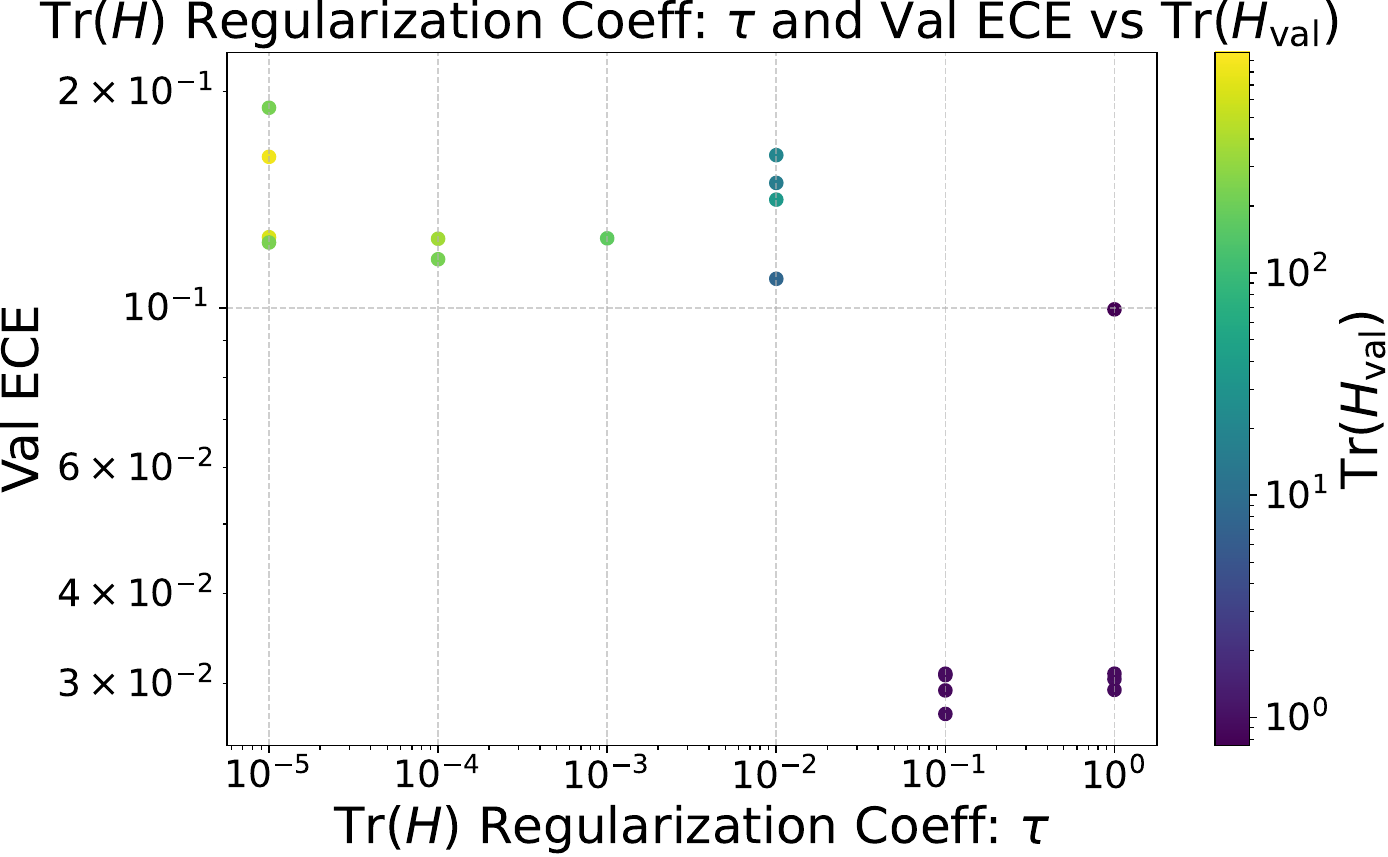}
    \includegraphics[width=0.49\linewidth]{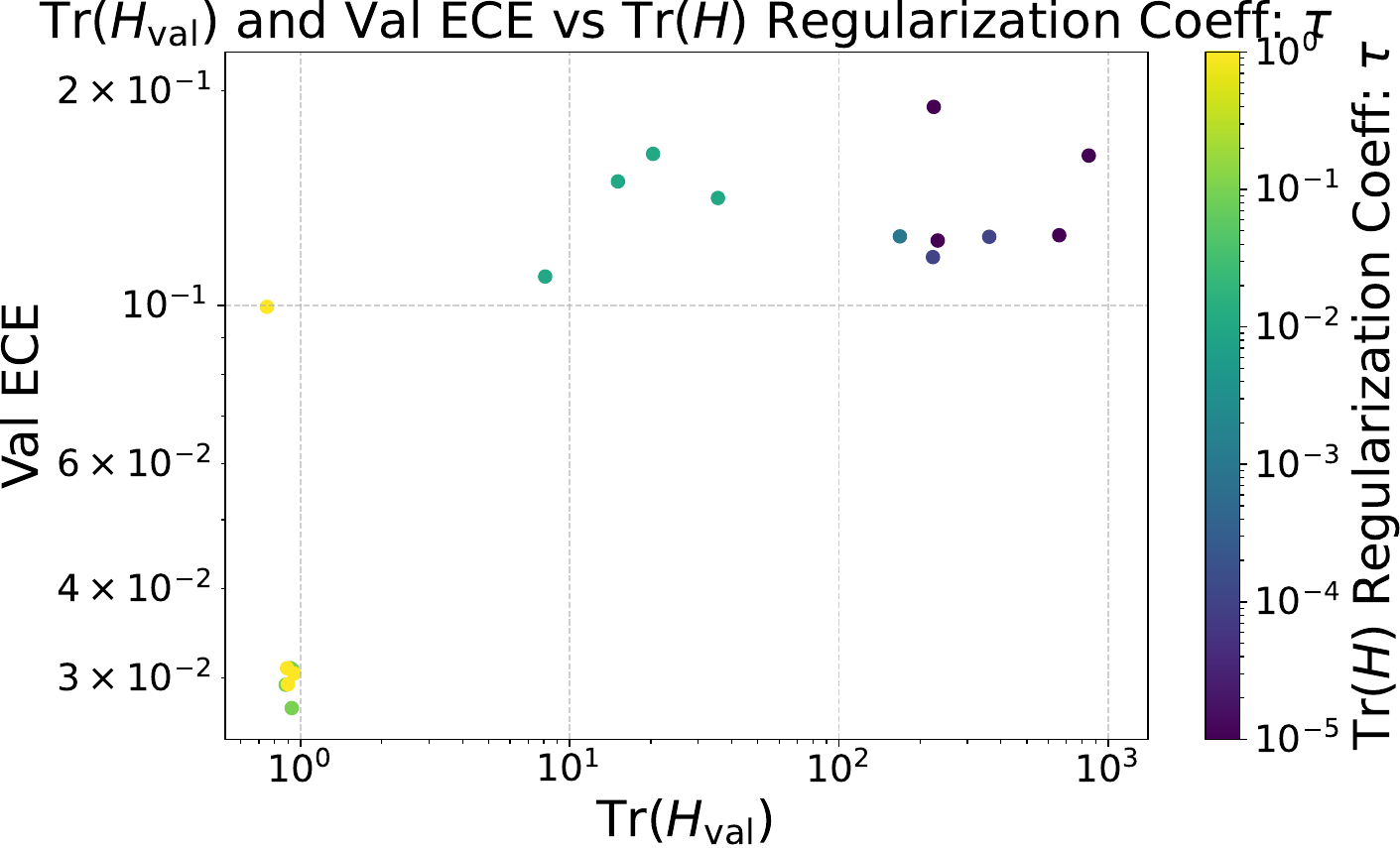}

\caption{This figure presents the results of training a 2-layer MLP with hidden-size 100 on CIFAR10 using a loss function that explicitly incorporates $\text{Tr}(H_\text{val})$ regularization. Each data point corresponds to a different learning rate and $\tau$ value. Higher $\tau$ values resulted in lower ECE and smaller $\text{Tr}(H_\text{val})$, while lower $\tau$ values led to higher ECE and $\text{Tr}(H_\text{val})$.}
\label{fig:explicit_reg}
\end{figure}

While directly computing $\text{Tr}(H)$ at each training step is computationally expensive, we can efficiently approximate it using the Hutchinson method~\citep{hutchinson1989stochastic,hutchinson-method} which utilizes matrix-vector products as
    $\mathrm{Tr}(H) = \mathbb{E}\left[v^\top H v\right] \approx \frac{1}{M}\sum^M_{i=1}v_i^\top H v_i$,
where $M > 0$ and $v\in\mathbb{R}^d$ be a random vector such that $\mathbb{E}[vv^\top] = I$.
This method relies on the power series expansion of the loss function and provides an accurate estimate of $\text{Tr}(H)$ without explicitly calculating the Hessian.

We trained models using the regularized loss function and evaluated their calibration performance using ECE. For each regularization coefficient $\tau$, we employed different learning rates to ensure fair comparison across data points.

Our experiments demonstrate that explicit curvature regularization via Hessian trace regularization effectively improves model calibration. As shown in \cref{fig:explicit_reg}, ECE consistently decreases with increasing $\tau$, indicating a strong correlation between curvature reduction and calibration improvement. This finding highlights the importance of curvature control in achieving well-calibrated models.

\section{Conclusion and Discussion}
\label{sec:conclusion}
In this study, we provided the analysis of focal loss from a geometrical point of view.
First, we reformulated the learning process by focal loss, suggesting that its behavior is like regularization with curvature reduction.
This reformulation allows us to view focal loss as restricting the parameter search to the neighborhood of the submanifold created by the probability distribution family that maximizes entropy.
This consequently leads to equivalence with regularization such that the curvature does not increase during optimization.
We also demonstrated that such a transformation is supported by the asymptotic expansion around the maximum likelihood estimator of focal loss.
Based on these arguments, we then conjecture that the curvature of the function is one of the key factors for model calibration.

To empirically validate these theoretical insights, we conducted a series of numerical experiments to probe the relationships between focal loss, curvature reduction, and model calibration performance. Our findings demonstrate that controlling curvature, implicitly through focal loss or explicitly through direct regularization, significantly enhances model calibration. 
This emphasizes the pivotal role of curvature considerations in achieving well-calibrated machine learning models.

In conclusion, our analysis and experimental results agree that effective curvature control, whether implicit or explicit, is indispensable for optimizing model calibration. 
This study deepens our understanding of focal loss from a geometrical perspective and enriches our understanding of the calibration of deep neural networks.

Finally, we suggest the potential impact of this study by listing the following open questions.
\begin{itemize}
    \item Effect of other calibration algorithms on curvature. This study suggested inductively that focal loss can control curvature and that curvature control leads to improved model calibration. The next question is whether other model calibration algorithms control curvature in the same way.
    \item Differences in calibration behavior that depend on model architecture. Figure~\ref{fig:correlation} shows that the curvature at which calibration performance is best depends on the model architecture. This may imply that the optimal calibration algorithm differs depending on the model architecture.
    \item Relationship to other curvature-aware algorithms. In recent years, various algorithms have shown that performance improvements can be achieved by considering the curvature of the loss surface. These algorithms could similarly lead to improved model calibration performance.
\end{itemize}

\section*{Acknowledgments}
This work was supported by funding from the ZOZO Research.

{
    \small
    \bibliographystyle{ieeenat_fullname}
    \bibliography{main}

\begin{thebibliography}{32}
\providecommand{\natexlab}[1]{#1}
\providecommand{\url}[1]{\texttt{#1}}
\expandafter\ifx\csname urlstyle\endcsname\relax
  \providecommand{\doi}[1]{doi: #1}\else
  \providecommand{\doi}{doi: \begingroup \urlstyle{rm}\Url}\fi

\bibitem[Amari(2016)]{amari2016information}
Shun-ichi Amari.
\newblock \emph{Information geometry and its applications}.
\newblock Springer, 2016.

\bibitem[Amari and Nagaoka(2000)]{amari2000methods}
Shun-ichi Amari and Hiroshi Nagaoka.
\newblock \emph{Methods of information geometry}.
\newblock American Mathematical Soc., 2000.

\bibitem[Avron and Toledo(2011)]{hutchinson-method}
Haim Avron and Sivan Toledo.
\newblock Randomized algorithms for estimating the trace of an implicit symmetric positive semi-definite matrix.
\newblock \emph{J. ACM}, 58\penalty0 (2), 2011.

\bibitem[Dosovitskiy et~al.(2020)Dosovitskiy, Beyer, Kolesnikov, Weissenborn, Zhai, Unterthiner, Dehghani, Minderer, Heigold, Gelly, et~al.]{dosovitskiy2020image}
Alexey Dosovitskiy, Lucas Beyer, Alexander Kolesnikov, Dirk Weissenborn, Xiaohua Zhai, Thomas Unterthiner, Mostafa Dehghani, Matthias Minderer, Georg Heigold, Sylvain Gelly, et~al.
\newblock An image is worth 16x16 words: Transformers for image recognition at scale.
\newblock In \emph{International Conference on Learning Representations}, 2020.

\bibitem[Fox(1987)]{fox1987introduction}
Charles Fox.
\newblock \emph{An introduction to the calculus of variations}.
\newblock Courier Corporation, 1987.

\bibitem[Gawlikowski et~al.(2021)Gawlikowski, Tassi, Ali, Lee, Humt, Feng, Kruspe, Triebel, Jung, Roscher, et~al.]{gawlikowski2021survey}
Jakob Gawlikowski, Cedrique Rovile~Njieutcheu Tassi, Mohsin Ali, Jongseok Lee, Matthias Humt, Jianxiang Feng, Anna Kruspe, Rudolph Triebel, Peter Jung, Ribana Roscher, et~al.
\newblock A survey of uncertainty in deep neural networks.
\newblock \emph{arXiv preprint arXiv:2107.03342}, 2021.

\bibitem[Ghosh et~al.(2022)Ghosh, Schaaf, and Gormley]{ghosh2022adafocal}
Arindam Ghosh, Thomas Schaaf, and Matthew Gormley.
\newblock Adafocal: Calibration-aware adaptive focal loss.
\newblock \emph{Advances in Neural Information Processing Systems}, 35:\penalty0 1583--1595, 2022.

\bibitem[Goldberg(2016)]{goldberg2016primer}
Yoav Goldberg.
\newblock A primer on neural network models for natural language processing.
\newblock \emph{Journal of Artificial Intelligence Research}, 57:\penalty0 345--420, 2016.

\bibitem[Goldberg(2022)]{goldberg2022neural}
Yoav Goldberg.
\newblock \emph{Neural network methods for natural language processing}.
\newblock Springer Nature, 2022.

\bibitem[Guo et~al.(2017)Guo, Pleiss, Sun, and Weinberger]{guo2017calibration}
Chuan Guo, Geoff Pleiss, Yu Sun, and Kilian~Q Weinberger.
\newblock On calibration of modern neural networks.
\newblock In \emph{International conference on machine learning}, pages 1321--1330. PMLR, 2017.

\bibitem[He et~al.(2016)He, Zhang, Ren, and Sun]{he2016deep}
Kaiming He, Xiangyu Zhang, Shaoqing Ren, and Jian Sun.
\newblock Deep residual learning for image recognition.
\newblock In \emph{Proceedings of the IEEE conference on computer vision and pattern recognition}, pages 770--778, 2016.

\bibitem[Hu and Hwang(2018)]{hu2018handbook}
Yu~Hen Hu and Jenq-Neng Hwang.
\newblock \emph{Handbook of neural network signal processing}.
\newblock CRC press, 2018.

\bibitem[Hutchinson(1989)]{hutchinson1989stochastic}
Michael~F Hutchinson.
\newblock A stochastic estimator of the trace of the influence matrix for laplacian smoothing splines.
\newblock \emph{Communications in Statistics-Simulation and Computation}, 18\penalty0 (3):\penalty0 1059--1076, 1989.

\bibitem[Khan et~al.(2018)Khan, Rahmani, Shah, Bennamoun, Medioni, and Dickinson]{khan2018guide}
Salman Khan, Hossein Rahmani, Syed Afaq~Ali Shah, Mohammed Bennamoun, Gerard Medioni, and Sven Dickinson.
\newblock A guide to convolutional neural networks for computer vision.
\newblock 2018.

\bibitem[Kiranyaz et~al.(2019)Kiranyaz, Ince, Abdeljaber, Avci, and Gabbouj]{kiranyaz20191}
Serkan Kiranyaz, Turker Ince, Osama Abdeljaber, Onur Avci, and Moncef Gabbouj.
\newblock 1-d convolutional neural networks for signal processing applications.
\newblock In \emph{ICASSP 2019-2019 IEEE International Conference on Acoustics, Speech and Signal Processing (ICASSP)}, pages 8360--8364. IEEE, 2019.

\bibitem[Krishnan and Tickoo(2020)]{krishnan2020improving}
Ranganath Krishnan and Omesh Tickoo.
\newblock Improving model calibration with accuracy versus uncertainty optimization.
\newblock \emph{Advances in Neural Information Processing Systems}, 33:\penalty0 18237--18248, 2020.

\bibitem[Kristiadi et~al.(2020)Kristiadi, Hein, and Hennig]{kristiadi2020being}
Agustinus Kristiadi, Matthias Hein, and Philipp Hennig.
\newblock Being bayesian, even just a bit, fixes overconfidence in relu networks.
\newblock In \emph{International conference on machine learning}, pages 5436--5446. PMLR, 2020.

\bibitem[Krizhevsky(2009)]{Krizhevsky09learningmultiple}
Alex Krizhevsky.
\newblock Learning multiple layers of features from tiny images.
\newblock Technical report, 2009.

\bibitem[Li et~al.(2022)Li, Yao, Tan, Zhang, Yu, Lu, and Luo]{li2022equalized}
Bo Li, Yongqiang Yao, Jingru Tan, Gang Zhang, Fengwei Yu, Jianwei Lu, and Ye Luo.
\newblock Equalized focal loss for dense long-tailed object detection.
\newblock In \emph{Proceedings of the IEEE/CVF Conference on Computer Vision and Pattern Recognition}, pages 6990--6999, 2022.

\bibitem[Li et~al.(2020)Li, Wang, Wu, Chen, Hu, Li, Tang, and Yang]{li2020generalized}
Xiang Li, Wenhai Wang, Lijun Wu, Shuo Chen, Xiaolin Hu, Jun Li, Jinhui Tang, and Jian Yang.
\newblock Generalized focal loss: Learning qualified and distributed bounding boxes for dense object detection.
\newblock \emph{Advances in Neural Information Processing Systems}, 33:\penalty0 21002--21012, 2020.

\bibitem[Li et~al.(2021)Li, Wang, Hu, Li, Tang, and Yang]{li2021generalized}
Xiang Li, Wenhai Wang, Xiaolin Hu, Jun Li, Jinhui Tang, and Jian Yang.
\newblock Generalized focal loss v2: Learning reliable localization quality estimation for dense object detection.
\newblock In \emph{Proceedings of the IEEE/CVF conference on computer vision and pattern recognition}, pages 11632--11641, 2021.

\bibitem[Lin et~al.(2017)Lin, Goyal, Girshick, He, and Doll{\'a}r]{lin2017focal}
Tsung-Yi Lin, Priya Goyal, Ross Girshick, Kaiming He, and Piotr Doll{\'a}r.
\newblock Focal loss for dense object detection.
\newblock In \emph{Proceedings of the IEEE international conference on computer vision}, pages 2980--2988, 2017.

\bibitem[Mukhoti et~al.(2020)Mukhoti, Kulharia, Sanyal, Golodetz, Torr, and Dokania]{mukhoti2020calibrating}
Jishnu Mukhoti, Viveka Kulharia, Amartya Sanyal, Stuart Golodetz, Philip Torr, and Puneet Dokania.
\newblock Calibrating deep neural networks using focal loss.
\newblock \emph{Advances in Neural Information Processing Systems}, 33:\penalty0 15288--15299, 2020.

\bibitem[Naeini et~al.(2015)Naeini, Cooper, and Hauskrecht]{naeini2015obtaining}
Mahdi~Pakdaman Naeini, Gregory Cooper, and Milos Hauskrecht.
\newblock Obtaining well calibrated probabilities using bayesian binning.
\newblock In \emph{Proceedings of the AAAI conference on artificial intelligence}, 2015.

\bibitem[Sandler et~al.(2018)Sandler, Howard, Zhu, Zhmoginov, and Chen]{sandler2018mobilenetv2}
Mark Sandler, Andrew Howard, Menglong Zhu, Andrey Zhmoginov, and Liang-Chieh Chen.
\newblock Mobilenetv2: Inverted residuals and linear bottlenecks.
\newblock In \emph{Proceedings of the IEEE conference on computer vision and pattern recognition}, pages 4510--4520, 2018.

\bibitem[Simonyan and Zisserman(2014)]{simonyan2014very}
Karen Simonyan and Andrew Zisserman.
\newblock Very deep convolutional networks for large-scale image recognition.
\newblock \emph{arXiv preprint arXiv:1409.1556}, 2014.

\bibitem[Sohrab(2003)]{sohrab2003basic}
Houshang~H Sohrab.
\newblock \emph{Basic real analysis}.
\newblock Springer, 2003.

\bibitem[Tao et~al.(2023)Tao, Dong, and Xu]{tao2023dual}
Linwei Tao, Minjing Dong, and Chang Xu.
\newblock Dual focal loss for calibration.
\newblock In \emph{International Conference on Machine Learning}, pages 33833--33849. PMLR, 2023.

\bibitem[Thiemann et~al.(2017)Thiemann, Igel, Wintenberger, and Seldin]{thiemann2017strongly}
Niklas Thiemann, Christian Igel, Olivier Wintenberger, and Yevgeny Seldin.
\newblock A strongly quasiconvex pac-bayesian bound.
\newblock In \emph{International Conference on Algorithmic Learning Theory}, pages 466--492. PMLR, 2017.

\bibitem[Wald et~al.(2021)Wald, Feder, Greenfeld, and Shalit]{wald2021calibration}
Yoav Wald, Amir Feder, Daniel Greenfeld, and Uri Shalit.
\newblock On calibration and out-of-domain generalization.
\newblock \emph{Advances in neural information processing systems}, 34:\penalty0 2215--2227, 2021.

\bibitem[Wei et~al.(2022)Wei, Xie, Cheng, Feng, An, and Li]{wei2022mitigating}
Hongxin Wei, Renchunzi Xie, Hao Cheng, Lei Feng, Bo An, and Yixuan Li.
\newblock Mitigating neural network overconfidence with logit normalization.
\newblock In \emph{International conference on machine learning}, pages 23631--23644. PMLR, 2022.

\bibitem[Zhou and Chellappa(2012)]{zhou2012artificial}
Yi-Tong Zhou and Rama Chellappa.
\newblock \emph{Artificial neural networks for computer vision}.
\newblock Springer Science \& Business Media, 2012.

\end{thebibliography}
}

\end{document}